\newtheorem{theorem}{Theorem}
\definecolor{rebuttalCol}{HTML}{0033DD}
\newcommand{\rebuttal}[1]{#1}
\title{Scale-invariant Learning by Physics Inversion}
\author{%
  Philipp Holl \thanks{Corresponding author, \texttt{philipp.holl@tum.de}} \\
  Technical University of Munich\\
  \And
  Vladlen Koltun \\
  Apple\\
  \And
  Nils Thuerey \\
  Technical University of Munich\\
}
\begin{document}

\maketitle

\begin{abstract}
Solving inverse problems, such as parameter estimation and optimal control, is a vital part of science.
Many experiments repeatedly collect data and rely on machine learning algorithms to quickly infer solutions to the associated inverse problems.
We find that state-of-the-art training techniques are not well-suited to many problems that involve physical processes.
The highly nonlinear behavior, common in physical processes, results in strongly varying gradients that lead first-order optimizers like SGD or Adam to compute suboptimal optimization directions.
We propose a novel hybrid training approach that combines higher-order optimization methods with machine learning techniques.
We take updates from a scale-invariant inverse problem solver and embed them into the gradient-descent-based learning pipeline, replacing the regular gradient of the physical process.
We demonstrate the capabilities of our method on a variety of canonical physical systems, showing that it yields significant improvements on a wide range of optimization and learning problems.
\end{abstract}

\section{Introduction}

Inverse problems that involve physical systems play a central role in computational science. This class of problems includes parameter estimation~\cite{tarantola2005inverse} and optimal control~\cite{zhou1996robust}.
Among others, solving inverse problems is integral in 
    detecting gravitational waves~\cite{RealtimeMLLIGO}, 
    controlling plasma flows~\cite{maingi2019fesreport},
    searching for neutrinoless double-beta decay~\cite{GERDA1, MAJORANA}, 
    and 
    testing general relativity~\cite{SolarEclipseDeflection, MercuryOrbit}.

%
\vspace{0.2cm} 
Decades of research in optimization have produced a wide range of iterative methods for solving inverse problems~\cite{NumericalRecipes}. Higher-order methods such as limited-memory BFGS~\cite{liu1989lbfgs} have been especially successful.
Such methods compute or approximate the Hessian of the optimization function in addition to the gradient, allowing them to locally invert the function and find stable optimization directions.
Gradient descent, in contrast, only requires the first derivative but converges more slowly, especially in ill-conditioned settings \cite{saarinen1993ill}.

Despite the success of iterative solvers, many of today's experiments rely on machine learning methods, and especially deep neural networks, to find unknown parameters given the observations~\cite{MLandScience, DL-EXO200, RealtimeMLLIGO, GERDA1}.
While learning methods typically cannot recover a solution up to machine precision, they have a number of advantages over iterative solvers.
First, their computational cost for inferring a solution is usually much lower than with iterative methods. 
This is especially important in time-critical applications, 
such as the search for rare events in data sets comprising of billions of individual recordings for collider physics.
Second, learning-based methods do not require an initial guess to solve a problem.
With iterative solvers, a poor initial guess can prevent convergence to a global optimum or lead to divergence (see Appendix~C.1). 
%
Third, learning-based solutions can be less prone to finding local optima than iterative methods because the parameters are shared across a large collection of problems~\cite{phiflow}. 
Combined with the stochastic nature of the training process, this allows gradients from other problems to push a prediction out of the basin of attraction of a local optimum.


Practically all state-of-the-art neural networks are trained using only first order information, mostly due to the computational cost of evaluating the Hessian w.r.t. the network parameters.
Despite the many breakthroughs in the field of deep learning, the fundamental shortcomings of gradient descent persist and are especially pronounced when optimizing non-linear functions such as physical systems. In such situations, the gradient magnitudes often vary strongly from example to example and parameter to parameter.

In this paper, we show that 
inverse physics solvers can be embedded into the traditional deep learning pipeline, resulting in a hybrid training scheme that aims to combine the fast convergence of higher-order solvers with the low computational cost of backpropagation for network training.
Instead of using the adjoint method to backpropagate through the physical process, we replace that gradient by the update computed from a higher-order solver which can encode the local nonlinear behavior of the physics.
These physics updates are then passed on to a traditional neural network optimizer which computes the updates to the network weights using backpropagation.
Thereby our approach maintains compatibility with acceleration schemes~\cite{duchi2011AdaGrad,Adam} and stabilization techniques~\cite{BatchNorm, StochasticDepth, LayerNorm} developed for training deep learning models.
The replacement of the physics gradient yields a simple mathematical formulation that lends itself to straightforward integration into existing machine learning frameworks.

%

In addition to a theoretical discussion, we perform an extensive empirical evaluation on a wide variety of inverse problems including the highly challenging Navier-Stokes equations.
We find that using higher-order or domain-specific solvers can drastically improve convergence speed and solution quality compared to traditional training without requiring the evaluation of the Hessian w.r.t. the model parameters.

\section{Scale-invariance in Optimization}

We consider unconstrained inverse problems that involve a differentiable physical process $\mathcal P: X \subset \mathbb R^{d_x} \rightarrow Y \subset \mathbb R^{d_y}$ which can be simulated.
Here $X$ denotes the physical parameter space and $Y$ the space of possible observed outcomes.
Given an observed or desired output $y^* \in Y$, the inverse problem consists of finding optimal parameters
\begin{equation} \label{eq:inverse-problem}
    x^* = \mathrm{arg\,min}_x L(x)  \quad \mathrm{with}  \quad  L(x) = \frac 1 2 \| \mathcal P(x) - y^* \|_2^2.
\end{equation}

Such problems are classically solved by starting with an initial guess $x_0$ and iteratively applying updates $x_{k+1} = x_k + \Delta x_k$.
Newton's method~\cite{atkinson2008numana} and many related methods~\cite{broyden1970convergence, liu1989lbfgs, GaussNewton, more1978levenbergmarq, powell1970dogleg, berndt1974bhhh, conn1991srone, avriel2003nonlinear} approximate $L$ around $x_k$ as a parabola $\tilde L(x) = L(x_k) + \frac{\partial L(x_k)}{\partial x_k} (x-x_k) + \frac 1 2 H_k (x-x_k)^2$ where $H_k$ denotes the Hessian or an approximation thereof.
Inverting $\tilde L$ and walking towards its minimum with step size $\eta$ yields
\begin{equation} \label{eq:newton}
    \Delta x_k = -\eta \cdot H_k^{-1} \left(\frac{\partial L(x_k)}{\partial x_k}\right)^T.
\end{equation}
The inversion of $H$ results in scale-invariant updates, i.e. when rescaling $x$ or any component of $x$, the optimization will behave the same way, leaving $L(x_k)$ unchanged.
An important consequence of scale-invariance is that minima will be approached equally quickly in terms of $L$ no matter how wide or deep they are.

Newton-type methods have one major downside, however.
The inversion depends on the Hessian $H$ which is expensive to compute exactly, and hard to approximate in typical machine learning settings with high-dimensional parameter spaces~\cite{DeepLearningBook} and mini-batches~\cite{schraudolph2007stochastic}.

Instead, practically all state-of-the-art deep learning relies on first-order information only.
Setting $H$ to the identity in Eq.~\ref{eq:newton} yields gradient descent updates $\Delta x = -\eta \cdot \left(\frac{\partial L}{\partial x}\right)^T$ which are not scale-invariant.
Rescaling $x$ by $\lambda$ also scales $\Delta x$ by $\lambda$, inducing a factor of $\lambda^2$ in the first-order-accurate loss change $L(x) - L(x+\Delta x) = -\eta \cdot (\frac{\partial L}{\partial x})^2 + \mathcal O(\Delta x^2)$.
Gradient descent prescribes small updates to parameters that require a large change to decrease $L$ and vice-versa, typically resulting in slower convergence than Newton updates~\cite{OptMethodComparison}.
This behavior is the root cause of exploding or vanishing gradients in deep neural networks.
The step size $\eta$ alone cannot remedy this behavior whenever $\frac{\partial L}{\partial x}$ varies along $x$.
Figure~\ref{fig:abstract-comparison} shows the optimization trajectories for the simple problem $\mathcal P(x) = (x_1, x_2^2)$ to illustrate this problem.

\begin{wrapfigure}{R}{7.2cm}
\centering
\vspace{-5mm}
\includegraphics[width=7cm]{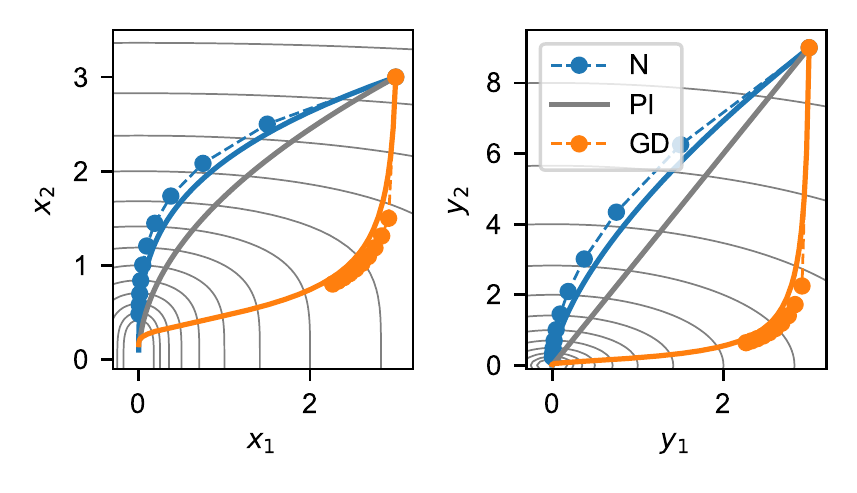}
\vspace{-5mm} 
\caption{
    Minimization (Eq.~\ref{eq:inverse-problem}) with $y \equiv \mathcal P(x) = (x_1, x_2^2)$. $L$ contours in gray. Trajectories of gradient descent (GD), Newton's method (N), and perfect physics inversion (PI) shown as lines (infinitesimal $\eta$) and circles (10 iterations with constant $\eta$).
    }
\label{fig:abstract-comparison}
\vspace{-.5cm}
\end{wrapfigure}

When training a neural network, the effect of scaling-variance can be reduced through normalization in intermediate layers~\cite{BatchNorm, LayerNorm} and regularization~\cite{AdamWeightDecay, Dropout} but this level of control is not present in most other optimization tasks, such as inverse problems (Eq.~\ref{eq:inverse-problem}).
%
More advanced first-order optimizers try to solve the scaling issue by approximating higher-order information~\cite{Adam, ConjugateGradient, duchi2011AdaGrad, HessianFree, kFAC, yao2020adahessian, pauloski2021kaisa, schnell2022half}, such as Adam where
$H \approx \mathrm{diag}\left( | \frac{\partial L}{\partial x} | \right)$, decreasing the loss scaling factor from $\lambda^2$ to $\lambda$.
However, these methods lack the exact higher-order information which limits the accuracy of the resulting update steps when optimizing nonlinear functions.

\section{Scale-invariant Physics and Deep Learning} \label{sec:PG}

We are interested in finding solutions to Eq.~\ref{eq:inverse-problem} using a neural network, $x^* = \mathrm{NN}(y^* \,|\, \theta)$, parameterized by $\theta$.
Let ${\mathcal Y^* = \{y^*_i \,|\, i = 1, ..., N\}}$ denote a set of $N$ inverse problems involving $\mathcal P$.
Then training the network means finding
\begin{equation} \label{eq:unsupervised-training}
    \theta_* = \mathrm{arg\,min}_\theta \sum_{i=1}^N \frac 1 2 \| \mathcal P\big(\mathrm{NN}(y^*_i \,|\, \theta)\big) - y^*_i \|_2^2 \,.
\end{equation}

Assuming a large parameter space $\theta$ and the use of mini-batches, higher-order methods are difficult to apply to this problem, as described above.
Additionally, the scale-variance issue of gradient descent is especially pronounced in this setting because only the network part of the joint problem $\mathrm{NN} \circ \mathcal P$ can be properly normalized while the physical process $\mathcal P$ is fixed.
Therefore, the traditional approach of computing the gradient $\frac{\partial L}{\partial \theta}$ using the adjoint method (backpropagation) can lead to undesired behavior.

Consider the problem $\mathcal P(x) = e^x$ with observed data $y^* \in (0, 1]$ (Appendix~C.2). 
Due to the exponential form of $\mathcal P$, the curvature around the solutions $x^* = \log(y^*)$ strongly depends on $y^*$.
This causes first-order network optimizers such as SGD or Adam to fail in approximating the correct solutions for small $y^*$ because their gradients are overshadowed by larger $y^*$ (see Fig.~\ref{fig:exp}).
Scaling the gradients to unit length in $x$ drastically improves the prediction accuracy, which hints at a possible solution:
If we could employ a scale-invariant physics solver, we would be able to optimize all examples, independent of the curvature around their respective minima.

\begin{wrapfigure}{R}{6cm}
\centering
\vspace{-6mm} 
\includegraphics{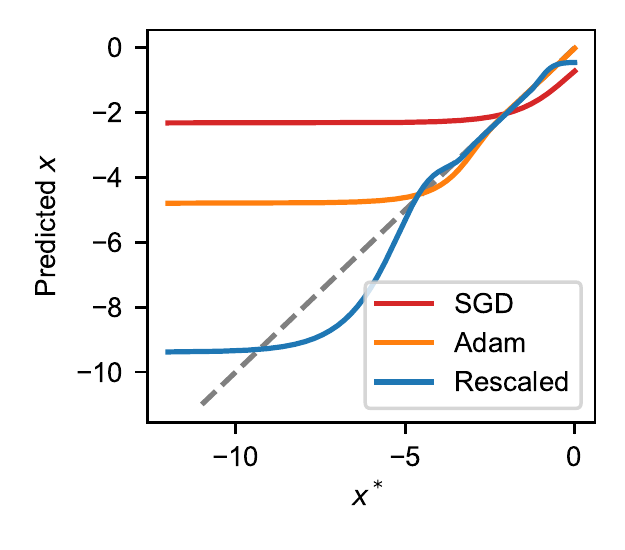} 
\vspace{-1cm} 
\caption{ 
    Networks trained according to Eq.~\ref{eq:unsupervised-training} with $\mathcal P(x) = e^x$.
    Stochastic gradient descent (SGD) and Adam fail to approximate solutions for small values due to scale variance.
    Normalizing the gradient in $x$ space (Rescaled) improves solution accuracy by decreasing scale variance.
    }
\label{fig:exp}
\vspace{-12mm}
\end{wrapfigure}

\subsection{Derivation}  
If $\mathcal P$ has a unique inverse and is sufficiently well-behaved, we can split the joint optimization problem (Eq.~\ref{eq:unsupervised-training}) into two stages.
First, solve all inverse problems individually, constructing a set of unique solutions $\mathcal X^\mathrm{sv} = \{ x^\mathrm{sv}_i = \mathcal P^{-1}(y^*_i) \}$ where $\mathcal P^{-1}$ denotes the inverse problem solver.
Second, use $\mathcal X^\mathrm{sv}$ as labels for supervised training
\begin{equation} \label{eq:supervised-training}
    \theta_* = \mathrm{arg\,min}_\theta \sum_{i=1}^N \frac 1 2 \| \mathrm{NN}(y^*_i \,|\, \theta) - x^\mathrm{sv}_i \|_2^2.
\end{equation}
This enables scale-invariant physics (SIP) inversion while a fast first-order method can be used to train the network which can be constructed to be normalized using state-of-the-art procedures~\cite{BatchNorm, LayerNorm, Dropout}.

Unfortunately, this two-stage approach is not applicable in multimodal settings, where $x^\mathrm{sv}$ depends on the initial guess $x_0$ used in the first stage.
This would cause the network to interpolate between possible solutions, leading to subpar convergence and generalization performance.
To avoid these problems, we alter the training procedure from Eq.~\ref{eq:supervised-training} in two ways.
First, we reintroduce $\mathcal P^{-1}$ into the training loop, yielding
\begin{equation} \label{eq:hybrid-training-intermediate}
    \theta_* = \mathrm{arg\,min}_\theta \sum_{i=1}^N \frac 1 2 \| \mathrm{NN}(y^*_i \,|\, \theta) - \mathcal P^{-1}(y^*_i) \|_2^2 \, .
\end{equation}
Next, we condition $\mathcal P^{-1}$ on the neural network prediction by using it as an initial guess, $\mathcal P^{-1}(y^*) \rightarrow \mathcal P^{-1}(y^* \,|\, \mathrm{NN}(y^* \,|\, \theta))$.
This makes training in multimodal settings possible because the embedded solver $\mathcal P^{-1}$ searches for minima close to the prediction of the  $\mathrm{NN}$.
Therefore $\theta$ can exit the basin of attraction of other minima and does not need to interpolate between possible solutions.
Also, since all inverse problems from $\mathcal Y^*$ are optimized jointly, this reduces the likelihood of any individual solution getting stuck in a local minimum, as discussed earlier. 


The obvious downside to this approach is that $\mathcal P^{-1}$ must be run for each training step.
When $\mathcal P^{-1}$ is an iterative solver, we can write it as $P^{-1}(y^* \,|\, \mathrm{NN}(y^* \,|\, \theta)) = \mathrm{NN}(y^* \,|\, \theta) + \Delta x_0 + ... + \Delta x_n$.
We denote the first update $\Delta x_0 \equiv U(y^* \,|\, \theta)$.
%


Instead of computing all updates $\Delta x$, we approximate $P^{-1}$ with its first update $U$.
Inserting this into Eq.~\ref{eq:hybrid-training-intermediate} with $(\circ) \equiv (y_i^* \,|\, \theta)$ yields
\begin{equation} \label{eq:hybrid-training}
    \theta_* = \mathrm{arg\,min}_\theta \sum_{i=1}^N \frac 1 2 \| \mathrm{NN}(y^*_i \,|\, \theta) 
    - \left( \mathrm{NN}(\circ) + U(\circ) \right) \|_2^2.
\end{equation}
This can be further simplified to $\sum_{i=1}^N \frac 1 2 \| U(y_i^* \,|\, \theta) \|_2^2$ but this form is hard to optimize directly as is requires backpropagation through $U$.
In addition, its minimization is not sufficient, because all fixed points of $U$, such as maxima or saddle points of $L$, can act as attractors.

Instead, we make the assumption $\frac{\partial \mathcal P^{-1}}{\partial y} = 0$ to remove the $(\circ)$ dependencies in Eq.~\ref{eq:hybrid-training}, treating them as constant.
This results in a simple $L^2$ loss for the network output.
As we will show, this avoids both issues.
It also allows us to factor the optimization into a network and a physics graph (see Fig.~\ref{fig:opt-flow}), so that all necessary derivative computations only require data from one of them.

\subsection{Update Rule}
The factorization described above results in the following update rule for training with SIP updates, shown in Fig.~\ref{fig:opt-flow}:
\begin{enumerate}
    \item Pass the network prediction $x_0$ to the physics graph and compute $\Delta x_0 \equiv U(y_i^* \,|\, x_0)$ for all examples in the mini-batch. 
    \item Send $\tilde x \equiv x_0 + \Delta x_0$ back and compute the network loss $\tilde L = \frac 1 2 || x_0 - \tilde x ||_2^2$.
    \item Update $\theta$ using any neural network optimizer, such as SGD or Adam, treating $\tilde x$ as a constant.
\end{enumerate}


\begin{figure}[htb]
\centering
\includegraphics[width=10cm]{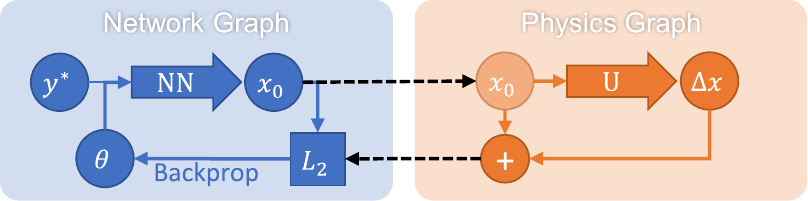}
\caption{
    Neural network (NN) training procedure with embedded inverse physics solver (U). 
    }
\label{fig:opt-flow}
\end{figure}

To see what updates $\Delta \theta$ result from this algorithm, we compute the gradient w.r.t. $\theta$,
\begin{equation}
    \frac{\partial \tilde L}{\partial \theta} = 
    \sum_{i=1}^N U(y^* \,|\, x_0) \cdot \frac{\partial \mathrm{NN}}{\partial \theta}
\end{equation}

Comparing this to the gradient resulting from optimizing the joint problem $\mathrm{NN} \circ \mathcal P$ with a single optimizer (Eq.~\ref{eq:unsupervised-training}),
\begin{equation}
    \frac{\partial L}{\partial\theta} =
    \Delta y_i \cdot \frac{\partial \mathcal P}{\partial x} \frac{\partial \mathrm{NN}}{\partial \theta}
\end{equation}
where $\Delta y_i = \mathcal P\big(\mathrm{NN}(y^*_i \,|\, \theta)\big) - y^*_i$,
we see that $U$ takes the place of $\Delta y_i \cdot \frac{\partial \mathcal P}{\partial x}$, the adjoint vector that would otherwise be computed by backpropagation.
Unlike the adjoint vector, however, $\tilde x = x_0 + U$ encodes an actual physical configuration.
Since $\tilde x$ can stem from a higher-order solver, the resulting updates $\Delta \theta$ can also encode non-linear information about the physics without computing the Hessian w.r.t. $\theta$.

\subsection{Convergence}
It is a priori not clear whether SIP training will converge, given that $\frac{\partial \mathcal P^{-1}}{\partial y}$ is not computed.
We start by proving convergence for two special choices of the inverse solver $\mathcal P^{-1}$ before considering the general case.
We assume that $\mathrm{NN}$ is expressive enough to fit our problem and that it is able to converge to every point $x$ for all examples using gradient descent:
\begin{equation} \label{assumption:network}
    \exists \eta > 0 \, : \, \forall i \  \forall x \ \forall \epsilon > 0 \ \exists n \in \mathbb N : || \mathrm{NN}_{\theta_n} - x ||_2 \leq \epsilon
\end{equation}
where $\theta_n$ is the sequence of gradient descent steps
\begin{equation}
    \theta_{n+1} = \theta_n - \eta  \left(\frac{\partial \mathrm{NN}}{\partial \theta}\right)^T(\mathrm{NN}_{\theta_n} - x)
\end{equation}
with $\eta > 0$.
Large enough networks fulfill this property under certain assumptions~\cite{du2018gradient} and the universal approximation theorem guarantees that such a configuration exists~\cite{NNApproximator}.
%

In the first special case, $U(x) \equiv U(y^* \, | \, x)$ points directly towards a solution $x^*$. This models the case of a known ground truth solution in a unimodal setting.
For brevity, we will drop the example indices and the dependencies on $y^*$.

\begin{theorem} \label{theorem:unique-proof}
If $\forall x \ \exists \lambda \in (0, 1] \  : \ U(x) = \lambda (x^* - x)$,
then the gradient descent sequence $\mathrm{NN}_{\theta_n}$ with $\theta_{n+1} = \theta_n + \eta \left(\frac{\partial \mathrm{NN}}{\partial \theta}\right)^T U(x)$ converges to $x^*$.
\end{theorem}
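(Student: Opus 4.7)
My plan is to substitute the assumed form of $U$ into the update rule, recognize SIP as ordinary gradient descent on the supervised potential $\Phi(\theta) = \tfrac12 \|\mathrm{NN}_\theta - x^*\|_2^2$ but with an effective step size $\eta\lambda_n \in (0, \eta]$, and then lift the convergence guaranteed by assumption~\ref{assumption:network} (with target $x^*$) to this rescaled variant via a Lyapunov/descent-lemma argument.

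First I would rewrite the iteration. Setting $x_n \equiv \mathrm{NN}_{\theta_n}$, $J_n \equiv (\partial \mathrm{NN}/\partial \theta)|_{\theta_n}$, and $\lambda_n \equiv \lambda(x_n) \in (0,1]$, the SIP update becomes
\begin{equation*}
\theta_{n+1} = \theta_n + \eta\lambda_n J_n^T(x^* - x_n) = \theta_n - \eta\lambda_n \nabla \Phi(\theta_n),
\end{equation*}
since $\nabla \Phi(\theta) = J^T(\mathrm{NN}_\theta - x^*)$. So each SIP step is exactly a gradient descent step on $\Phi$ with step size $\eta\lambda_n \le \eta$. Applying assumption~\ref{assumption:network} with $x = x^*$ tells us that the unscaled ($\lambda\equiv 1$) gradient descent sequence on $\Phi$ converges to $x^*$, which in particular means $\eta$ lies in the regime where a descent inequality holds on $\Phi$. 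Because shrinking the step to $\eta\lambda_n \le \eta$ preserves that descent guarantee, I would obtain $\Phi(\theta_{n+1}) \le \Phi(\theta_n) - c\,\eta\lambda_n\|\nabla\Phi(\theta_n)\|_2^2$ for some $c>0$, so $\Phi(\theta_n)$ is non-increasing, bounded below by $0$, and converges to some limit $\Phi^\infty \ge 0$.

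To rule out $\Phi^\infty > 0$ I would argue by contradiction. Summing the descent inequality yields $\sum_n \lambda_n \|\nabla\Phi(\theta_n)\|_2^2 < \infty$. If $\Phi^\infty > 0$, then $\|x_n - x^*\|$ is bounded away from $0$; since $\lambda$ maps into $(0,1]$ and (by continuity on the compact sublevel set $\{\Phi \le \Phi(\theta_0)\}$) is uniformly bounded below by some $\lambda_{\min} > 0$, and since assumption~\ref{assumption:network} forces the network's Jacobian to be expressive enough that $\|\nabla\Phi(\theta_n)\|$ cannot vanish while $\Phi(\theta_n)$ stays positive, each summand is uniformly bounded below, contradicting summability. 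Hence $\Phi^\infty = 0$ and $\mathrm{NN}_{\theta_n} \to x^*$.

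The main obstacle I anticipate is extracting a quantitative Polyak--{\L}ojasiewicz-style bound $\|\nabla\Phi(\theta_n)\|_2^2 \gtrsim \Phi(\theta_n)$ purely from the qualitative convergence statement of assumption~\ref{assumption:network}: the assumption guarantees convergence of supervised gradient descent but not an explicit rate, whereas the contradiction step needs precisely such a lower bound on the non-vanishing tail. In practice this follows under the overparameterized NTK regime of \cite{du2018gradient} invoked by the assumption; a fully rigorous version would either strengthen the assumption accordingly or restrict the analysis to a neighborhood of $x^*$ on which $J$ has full row rank. A secondary minor obstacle is guaranteeing $\lambda_n \ge \lambda_{\min}$, which is immediate if $\lambda$ is continuous but is not explicitly stated in the theorem.
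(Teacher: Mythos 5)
Your proposal follows essentially the same route as the paper's proof: rewrite $U(x) = \lambda(x^* - x)$ as the negative gradient of $\hat L = \frac{\lambda}{2}\|x - x^*\|_2^2$, so that the SIP update is exactly supervised gradient descent toward $x^*$ with the step scaled by $\lambda \in (0,1]$, and then invoke the convergence of gradient descent via the network expressivity assumption (Eq.~\ref{assumption:network}). The paper stops there, simply asserting that ``the convergence proof of gradient descent applies,'' so the two gaps you flag --- that the qualitative assumption does not by itself yield a quantitative descent or Polyak--{\L}ojasiewicz bound, and that $\lambda(x)$ is not explicitly bounded below --- are present in the paper's own argument as well; your elaboration is, if anything, the more careful of the two.
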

\begin{proof}
Rewriting $U(x) = -\frac{\partial}{\partial x} \left( \frac \lambda 2 ||x - x^*||_2^2 \right)$ yields the update $\theta_{n+1} - \theta_n = - \eta \left(\frac{\partial \hat L}{\partial x} \frac{\partial \mathrm{NN}}{\partial \theta} \right)^T$ where $\hat L = \frac \lambda 2 ||x-x^*||_2^2$. This describes gradient descent towards $x^*$ with the gradients scaled by $\lambda$.
Since $\lambda \in (0, 1]$, the convergence proof of gradient descent applies.
\end{proof}

The second special case has $U(x)$ pointing in the direction of steepest gradient descent in $x$ space.

\begin{theorem}
If $\forall x \  \exists \lambda \in (0, 1] \  : \ U(x) = - \lambda \left(\frac{\partial L}{\partial x}\right)^T$,
then the gradient descent sequence $\mathrm{NN}_{\theta_n}$ with $\theta_{n+1} = \theta_n + \eta \left(\frac{\partial \mathrm{NN}}{\partial \theta}\right)^T U(x) $ converges to minima of $L$.
\end{theorem}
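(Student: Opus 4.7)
The plan is to mirror the proof of Theorem~\ref{theorem:unique-proof}: recognize $U(x)$ as the (scaled) gradient of the true loss $L$ in $x$-space, apply the chain rule to pull it back through $\mathrm{NN}$, and conclude that the resulting $\theta$-update is exactly gradient descent on $L \circ \mathrm{NN}$ with a (bounded) variable step size, so standard gradient descent convergence applies.

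\textbf{Key steps.} First, I would substitute the hypothesis $U(x) = -\lambda \left(\frac{\partial L}{\partial x}\right)^T$ into the update rule to obtain
\begin{equation*}
    \theta_{n+1} - \theta_n \;=\; -\eta\,\lambda \left(\frac{\partial \mathrm{NN}}{\partial \theta}\right)^T \left(\frac{\partial L}{\partial x}\right)^T.
\end{equation*}
Second, I would apply the chain rule in the form $\frac{\partial L}{\partial \theta} = \frac{\partial L}{\partial x}\,\frac{\partial \mathrm{NN}}{\partial \theta}$, so that the update becomes $\theta_{n+1} - \theta_n = -\eta\,\lambda \left(\frac{\partial L}{\partial \theta}\right)^T$, i.e., a gradient descent step on the composite loss $L(\mathrm{NN}_{\theta}(y^*))$ with effective learning rate $\eta\,\lambda$. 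Third, since $\lambda \in (0,1]$ is bounded away from zero on any bounded orbit (or, if one prefers, can be absorbed into a time-varying but uniformly bounded step size), the iteration is standard gradient descent on $L \circ \mathrm{NN}$, and I would invoke the usual convergence guarantee for gradient descent on a differentiable loss, combined with the expressivity assumption in Eq.~(\ref{assumption:network}), to conclude that $\mathrm{NN}_{\theta_n}$ approaches a minimum of $L$.

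\textbf{Main obstacle.} The subtle point is that $\lambda$ is allowed to depend on $x$, so the effective learning rate $\eta\,\lambda(x)$ varies along the trajectory; one must argue that this variable rescaling does not destroy convergence. Since $\lambda$ takes values in the compact interval $(0,1]$, this reduces to the standard observation that gradient descent with a step-size sequence bounded above by $\eta$ and strictly positive still converges to a stationary point of $L$ under the same smoothness hypotheses implicit in Theorem~\ref{theorem:unique-proof}. The remaining bookkeeping, summing over the $N$ training examples and noting that the per-example updates add linearly so that the joint update is gradient descent on $\sum_i L(\mathrm{NN}_\theta(y^*_i))$, is routine and parallels Theorem~\ref{theorem:unique-proof} directly.
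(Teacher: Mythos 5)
Your proposal is correct and takes essentially the same route as the paper: the paper's proof likewise substitutes the hypothesis, applies the chain rule to rewrite the update as $\theta_{n+1}-\theta_n = -\eta\lambda\left(\frac{\partial L}{\partial x}\frac{\partial \mathrm{NN}}{\partial \theta}\right)^T$, and identifies this as a $\lambda$-scaled gradient descent step on the composite loss. The paper is in fact terser than you are and shares the same unaddressed subtlety you flag (that $\lambda\in(0,1]$ is not bounded away from zero, so the variable step size could in principle vanish along the trajectory), so no substantive difference in approach remains.
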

\begin{proof}
This is equivalent to gradient descent in $L(\theta) \equiv (\mathrm{NN} \circ L)(\theta)$.
Rewriting the update yields $\theta_{n+1} - \theta_n
= - \eta \lambda \left(\frac{\partial L}{\partial x} \frac{\partial \mathrm{NN}}{\partial \theta}\right)^T  
$ which is the gradient descent update scaled by $\lambda$.
\end{proof}

Next, we consider the general case of an arbitrary $\mathcal P^{-1}$ and $U$.
We require that $U$ decreases $L$ by a minimum relative amount specified by $\tau$,
\begin{equation} \label{assumption:loss-decrease}
    \exists \tau > 0 \ : \ \forall x \ : \ L(x) - L(x+U(x)) \geq \tau \left(L(x) - L(x^*)\right)
    .
\end{equation}
To guarantee convergence to a region, we also require
\begin{equation} \label{assumption:settle-down}
    \exists K > 0 \ : \ \forall x \ : \ ||U(x)|| \leq K (L(x) - L(x^*))
    .
\end{equation}



\begin{theorem} \label{theorem:main-proof}
There exists an update strategy $\theta_{n+1} = S(\theta_n)$ based on a single evaluation of $U$ for which $L(\mathrm{NN}_{\theta_n}(y))$ converges to a minimum $x^*$ or minimum region of $L$ for all examples.
\end{theorem}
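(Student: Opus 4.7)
The plan is to define $S$ as a two-level procedure: each application of $S$ evaluates $U$ once at the current network output $x_n = \mathrm{NN}_{\theta_n}(y)$, freezes the resulting target $\tilde x_n = x_n + U(x_n)$, and then executes a finite sequence of ordinary gradient-descent steps on the surrogate loss $\tilde L(\theta) = \frac{1}{2}\|\mathrm{NN}_\theta(y) - \tilde x_n\|_2^2$ until $\|\mathrm{NN}_\theta(y) - \tilde x_n\|_2 \leq \epsilon_n$ for some prescribed tolerance $\epsilon_n$. Assumption~\eqref{assumption:network} guarantees that such a finite inner schedule exists, so $S$ is well-defined and consumes exactly one evaluation of $U$ per outer step, as required.

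Next I would establish geometric decrease of the loss along the outer iteration. Applying assumption~\eqref{assumption:loss-decrease} at $x_n$ yields $L(\tilde x_n) - L(x^*) \leq (1-\tau)\,(L(x_n) - L(x^*))$. Coupling the tolerance to the current gap, for instance $\epsilon_n = c\cdot(L(x_n)-L(x^*))$ for sufficiently small $c$, and invoking local Lipschitz continuity of $L$ inherited from the smoothness of $\mathcal P$, the inner-loop error can be absorbed into an effective rate $\tau' \in (0,\tau)$, giving $L(x_{n+1}) - L(x^*) \leq (1-\tau')\,(L(x_n) - L(x^*))$. This forces exponential decay of the optimality gap; iterating the same argument over the $N$ examples (or, equivalently, across a mini-batch, since the estimate applies pointwise in $y^*$) extends the bound to the full dataset.

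Convergence of $x_n$ itself then follows from assumption~\eqref{assumption:settle-down}: because $L(x_n) - L(x^*) \to 0$ geometrically, so does $\|U(x_n)\|$, so the outer displacements $\tilde x_n - x_n$ are summable and the sequence $x_n$ settles into the minimum $x^*$ or into the degenerate region of minima where $U \equiv 0$. Combined with the exponential loss bound of the previous step, this gives both the loss and the parameter convergence asserted in the theorem.

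The main obstacle I expect is the second step: controlling the compounded effect of the inner-loop approximation error. A single gradient-descent step on $\tilde L$ need not decrease $L$, since $\tilde x_n$ is a physical target rather than a descent direction in $L$, so the network may overshoot or undershoot it; and the outer target itself drifts from step to step, which means any local Lipschitz constant used to bound the inner-loop error must be made uniform along the whole trajectory. The inductive coupling of $\epsilon_n$ to the loss gap, together with a uniform local Lipschitz bound on $L$ on a sublevel set that contains the entire sequence, is what prevents the analysis from degrading across outer iterations, and making this uniformity rigorous in the multimodal setting (where the effective local minimum $x^*$ can depend on the iterate) is the most delicate part of the argument.
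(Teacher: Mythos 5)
Your overall architecture is the same as the paper's: a two-level strategy that evaluates $U$ once per outer step, freezes $\tilde x_n = x_n + U(x_n)$, runs inner gradient descent on the surrogate $\frac 1 2 \|\mathrm{NN}_\theta - \tilde x_n\|_2^2$, and then chains assumption~\eqref{assumption:loss-decrease} into a geometric decay of the optimality gap, with assumption~\eqref{assumption:settle-down} closing the argument by forcing $\|U(x_n)\| \to 0$. The one substantive difference is your inner-loop stopping rule, and it is exactly where your self-identified ``main obstacle'' lives. You terminate the inner loop on an $x$-space tolerance $\|\mathrm{NN}_\theta - \tilde x_n\| \leq \epsilon_n$ and then need a uniform local Lipschitz bound on $L$ over the whole trajectory to convert that tolerance into a loss decrease with an effective rate $\tau' < \tau$; such a bound is not among the stated assumptions and would require an additional argument that the iterates stay in a compact set. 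The paper sidesteps this entirely by terminating the inner loop on the loss itself: stop once $L(x_n) - L(\mathrm{NN}_\theta) \geq \frac \tau 2 \Delta L_n^*$. Finite termination of that inner loop needs only continuity of $L$ --- the set $I_2 = \{x : L(x_n) - L(x) > \frac \tau 2 \Delta L_n^*\}$ is open and contains $\tilde x_n$ with margin (since assumption~\eqref{assumption:loss-decrease} gives a full factor $\tau$, not $\tau/2$), so it contains a ball $B_\epsilon(\tilde x_n)$ that the network reaches in finitely many steps by assumption~\eqref{assumption:network}. With that stopping rule the per-step contraction $\Delta L_{n+1}^* \leq (1 - \frac \tau 2)\Delta L_n^*$ is immediate from the termination condition, no Lipschitz constant appears anywhere, and the uniformity concern dissolves. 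Your worry about the multimodal case is also moot in the paper's formulation, since $L(x^*)$ there is the fixed global minimum value of $L_i$, not an iterate-dependent local minimum. If you replace your $\epsilon_n$-criterion with the loss-based one, your proof becomes complete and coincides with the paper's; as written, the Lipschitz-uniformity step is a genuine gap.
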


\begin{wrapfigure}{R}{7cm}
\centering
\vspace{-3mm}
\includegraphics[width=7cm]{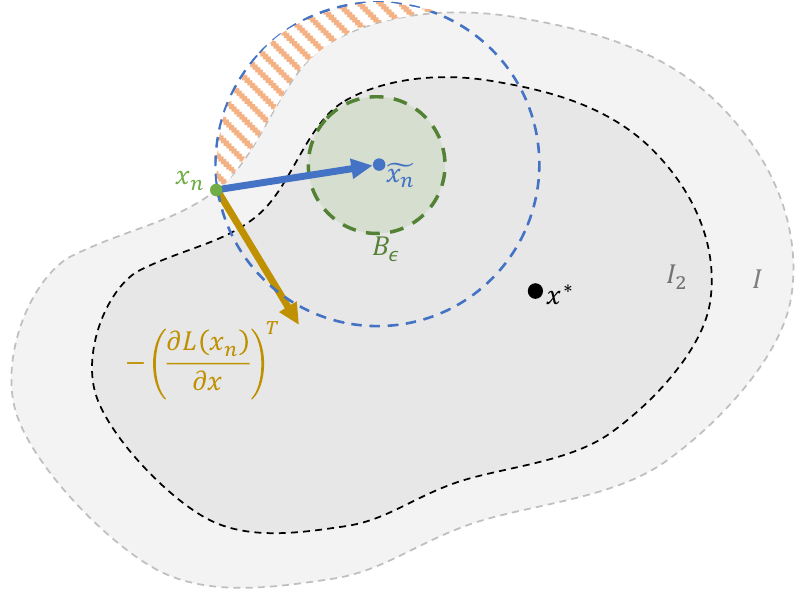}
\caption{
    Convergence visualization for the proof of theorem~\ref{theorem:main-proof}. All shown objects are visualized in $x$ space for one example $i$.
    }
\label{fig:convergence}
\vspace{-.4cm}
\end{wrapfigure}

\begin{proof}
We denote $\tilde x_n \equiv x_n + U(x_n)$ and $\Delta L^* = L(x_n) - L(x^*)$.
Let $I_2$ denote the open set of all $x$ for which $L(x) - L(x_n) > \frac \tau 2 \Delta L^*$.
Eq.~\ref{assumption:loss-decrease} provides that $\tilde x_n \in I_2$.
Since $I_2$ is open, $\exists \epsilon > 0 : \forall x \in B_\epsilon(\tilde x_n) : L(x_n) - L(x) > \frac \tau 2 \Delta L^*$ where $B_\epsilon(x)$ denotes the open set containing all $x' \in \mathbb R^d$ for which $||x' - x||_2 < \epsilon$, i.e. there exists a small ball around $\tilde x_n$ which is fully contained in $I_2$ (see sketch in Fig.~\ref{fig:convergence}).

Using Eq.~\ref{assumption:network}, we can find a finite $n \in \mathbb N$ for which $\mathrm{NN}_{\theta_n} \in B_\epsilon(\tilde x_n)$ and therefore $L(x_n) - L(\mathrm{NN}_{\theta_n}) > \frac \tau 2 \Delta L^*$.
We can thus use the following strategy $S$ for minimizing $L$:
First, compute $\tilde x_n = x_n + U(x_n)$.
Then perform gradient descent steps in $\theta$ with the effective objective function $\frac 1 2 ||\mathrm{NN}_\theta - \tilde x_n||_2^2$ until $L(x_n) - L(\mathrm{NN}_\theta) \geq \frac \tau 2 \Delta L^*$.
Each application of $S$ reduces the loss to $\Delta L_{n+1}^* \leq (1 - \frac \tau 2) \Delta L_n^*$ so any value of $L > L(x^*)$ can be reached within a finite number of steps.
Eq.~\ref{assumption:settle-down} ensures that $||U(x)|| \rightarrow 0$ as the optimization progresses which guarantees that the optimization converges to a minimum region.
\end{proof}

While this theorem guarantees convergence, it requires potentially many gradient descent steps in $\theta$ for each physics update $U$.
This can be advantageous in special circumstances, e.g. when $U$ is more expensive to compute than an update in $\theta$, or when $\theta$ is far away from a solution.
However, in many cases, we want to re-evaluate $U$ after each update to $\theta$.
Without additional assumptions about $U$ and $\mathrm{NN}_\theta$, there is no guarantee that $L$ will decrease every iteration, even for infinitesimally small step sizes.
Despite this, there is good reason to assume that the optimization will decrease $L$ over time.
This can be seen visually in Fig.~\ref{fig:convergence} where the next prediction $x_{n+1}$ is guaranteed to lie within the blue circle.
The region of increasing loss is shaded orange and always fills less than half of the volume of the circle, assuming we choose a sufficiently small step size.
We formalize this argument in appendix~A.2.

\paragraph{Remarks and lemmas}
We considered the case that $U \equiv \Delta x_0$. This can be trivially extended to the case that $U \equiv \Delta x_0 + ... + \Delta x_m$ for any $m \in \mathbb N$.
When we let the solver run to convergence, i.e. $m$ large enough, theorem~\ref{theorem:unique-proof} guarantees convergence if $\mathcal P^{-1}$ consistently converges to the same $x^*_i$.
Also note that any minimum $\theta^*$ found with SIP training fulfills $U=0$ for all examples, i.e. we are implicitly minimizing $\sum_{i=1}^N \frac 1 2 \| U(y_i^* \,|\, \theta) \|_2^2$.
%

\subsection{Experimental Characterization}
We first investigate the convergence behavior of SIP training depending on characteristics of $\mathcal P$.
We construct the synthetic two-dimensional inverse process
$$\mathcal P(x) = \left(\sin(\hat x_1) / \xi, \  \hat x_2 \cdot \xi \right) \quad \mathrm{with} \quad \hat x = \gamma \cdot R_\phi \cdot x , $$
where $R_\phi \in \mathrm{SO}(2)$ denotes a rotation matrix and $\gamma > 0$.
The parameters $\xi$ and $\phi$ allow us to continuously change the characteristics of the system.
The value of $\xi$ determines the conditioning of $\mathcal P$ with large $\xi$ representing ill-conditioned problems while $\phi$ describes the 
coupling 
of $x_1$ and $x_2$. When $\phi=0$, the off-diagonal elements of the Hessian vanish and the problem factors into two independent problems.
Fig.~\ref{fig:sin-main}a shows one example loss landscape.

We train a fully-connected neural network to invert this problem (Eq.~\ref{eq:unsupervised-training}), comparing SIP training using a saddle-free Newton solver~\cite{SaddleFreeNewton} to various state-of-the-art network optimizers.
We select the best learning rate for each optimizer independently.
For $\xi=0$, when the problem is perfectly conditioned, all network optimizers converge, with Adam converging the quickest (Fig.~\ref{fig:sin-main}b).
Note that the relatively slow convergence of SIP mostly stems from it taking significantly more time per iteration than the other methods, on average 3 times as long as Adam.
As we have spent little time in eliminating computational overheads, SIP performance could likely be significantly improved to near-Adam performance.

\begin{figure}[bt]
\centering
\includegraphics{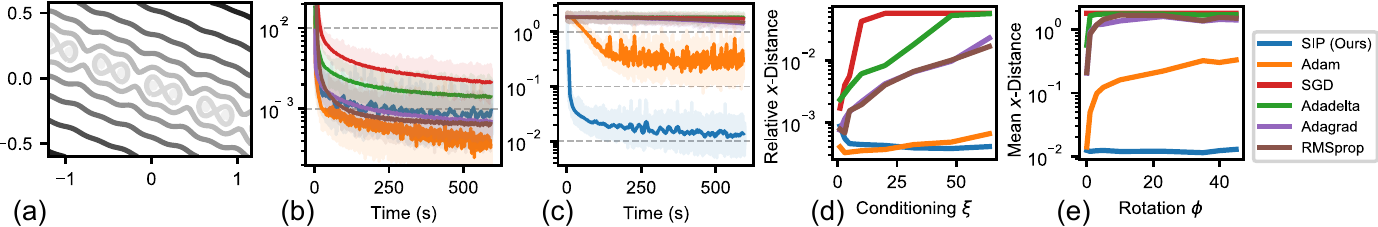}
\caption{\label{fig:sin-main}
(a)~Example loss landscape with $y^*=(0.3, -0.5)$, $\xi=1$, $\phi=15^\circ$.
(b,c)~Learning curves with $\phi=\frac{\pi}{4}$, averaged over 256 min-batches. For (b) $\xi=1$, and (c) $\xi=32$.
(d)~Dependence on problem conditioning $\xi$ (with $\phi=0$).
(e)~Dependence on parameter coupling $\phi$ (with $\xi=32$).}
\end{figure}

When increasing $\xi$ with $\phi=0$ fixed (Fig.~\ref{fig:sin-main}d), the accuracy of all traditional network optimizers decreases because the gradients scale with $(1/\xi, \xi)$ in $x$, elongating in $x_2$, the direction that requires more precise values.
SIP training uses the Hessian to invert the scaling behavior, producing updates that align with the flat direction in $x$ to avoid this issue.
This allows SIP training to retain its relative accuracy over a wide range of $\xi$.
At $\xi=32$, only SIP and Adam succeed in optimizing the network to a significant degree (Fig.~\ref{fig:sin-main}c).

Varying $\phi$ with $\xi=32$ fixed (Fig.~\ref{fig:sin-main}e) sheds light on how Adam manages to learn in ill-conditioned settings.
Its diagonal approximation of the Hessian reduces the scaling effect when $x_1$ and $x_2$ lie on different scales, but when the parameters are coupled, the lack of off-diagonal terms prevents this.
SIP training has no problem with coupled parameters since its updates are based on the full-rank Hessian $\frac{\partial^2 L}{\partial x^2}$.



\subsection{Application to High-dimensional Problems}
Explicitly evaluating the Hessian is not feasible for high-dimensional problems.
However, scale-invariant updates can still be computed, e.g. by inverting the gradient or via domain knowledge.
We test SIP training on three high-dimensional physical systems described by partial differential equations: Poisson's equation, the heat equation, and the Navier-Stokes equations.
This selection covers ubiquitous physical processes with diffusion, transport, and strongly non-local effects, featuring both explicit and implicit solvers.
%
All code required to reproduce our results is available at \url{https://github.com/tum-pbs/SIP}. 
A detailed description of all experiments along with additional visualizations and performance measurements can be found in appendix~B. 



%
%
%

\paragraph{Poisson's equation} \label{sec:poisson}

Poisson's equation, $\mathcal P(x) = \nabla^{-2} x$, plays an important role in electrostatics, Newtonian gravity, and fluid dynamics~\cite{ames2014numerical}.
It has the property that local changes in $x$ can affect $\mathcal P(x)$ globally.
Here we consider a two-dimensional system and train a U-net~\cite{UNet} to solve inverse problems (Eq.~\ref{eq:inverse-problem}) on pseudo-randomly generated $y^*$.
We compare SIP training to SGD with momentum, Adam, AdaHessian~\cite{yao2020adahessian}, Fourier neural operators (FNO)~\cite{kovachki2021neural} and Hessian-free optimization (H-free)~\cite{HessianFree}.
Fig.~\ref{fig:Poisson}b shows the learning curves.
%
The training with SGD, Adam and AdaHessian drastically slows within the first 300 iterations.
FNO and H-free both improve upon this behavior, reaching twice the accuracy before slowing.
For SIP, we construct scale-invariant $\Delta x$ based on the analytic inverse of Poisson's equation and use Adam to compute $\Delta\theta$.
The curve closely resembles an exponential curve, which indicates linear convergence, the ideal case for first-order methods optimizing an $L_2$ objective.
During all of training, the SIP variant converges exponentially faster than the traditional optimizers, its relative performance difference compared to Adam continually increasing from a factor of 3 at iteration 60 to a full order of magnitude after 5k iterations.
This difference can be seen in the inferred solutions (Fig.~\ref{fig:Poisson}a) which are noticeably more detailed.

\begin{figure}[tb]
\centering
\includegraphics{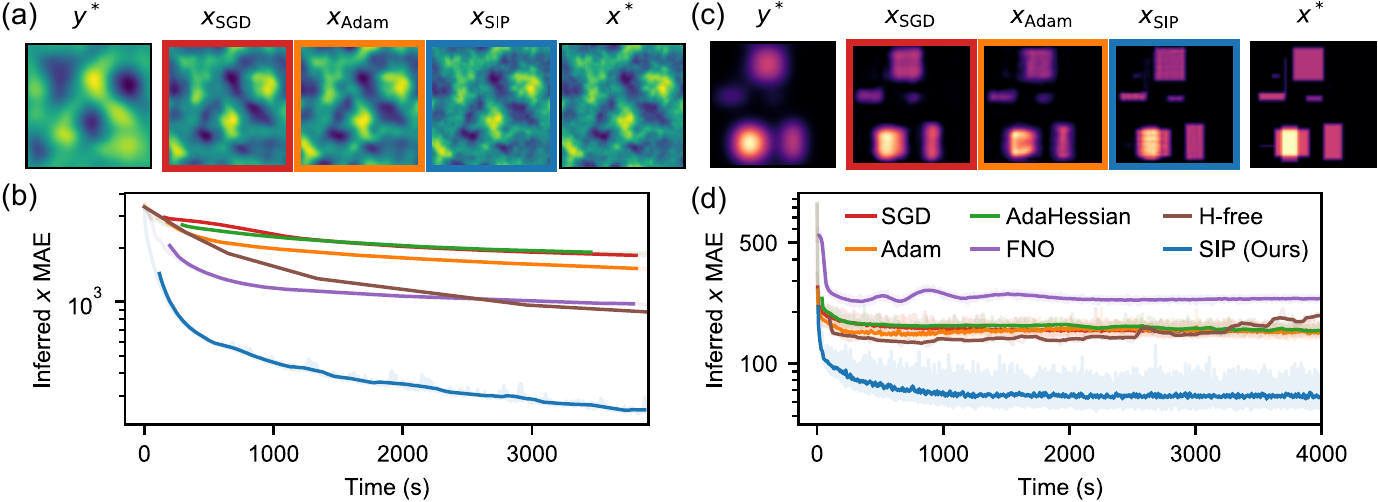}
\caption{\label{fig:Poisson} Poisson's equation (left) and the heat equation (right). (a,c)~Example from the data set: observed distribution ($y^*$), inferred solutions, and ground truth solution ($x^*$). (b,d)~Learning curves, running average over 64 mini-batches (except for H-free).}
\end{figure}

\paragraph{Heat equation} \label{sec:heat} 


Next, we consider a system with fundamentally non-invertible dynamics.
The heat equation, $\frac{\partial u}{\partial t} = \nu \cdot \nabla^2 u$, models heat flow in solids but also plays a part in many diffusive systems~\cite{DiffusionEquations}.
It gradually destroys information as the temperature equilibrium is approached~\cite{HeatInformationLoss}, causing $\nabla \mathcal P$ to become near-singular.
Inspired by heat conduction in microprocessors, we generate examples $x_\mathrm{GT}$ by randomly scattering four to ten heat generating rectangular regions on a plane and simulating the heat profile $y^* = \mathcal P(x_\mathrm{GT})$ as observed from outside a heat-conducting casing.
%
The learning curves for the corresponding inverse problem are shown in Fig.~\ref{fig:Poisson}d.

When training with SGD, Adam or AdaHessian, we observe that the distance to the solution starts rapidly decreasing before decelerating between iterations 30 and 40 to a slow but mostly stable convergence.
The sudden deceleration is rooted in the adjoint problem, which is also a diffusion problem.
Backpropagation through $\mathcal P$ removes detail from the gradients, which makes it hard for first-order methods to recover the solution.
H-free initially finds better solutions but then stagnates with the solution quality slowly deteriorating.
FNO performs poorly on this task, likely due to the sharp edges in $x^*$.

The information loss in $\mathcal P$ prevents direct numerical inversion of the gradient or Hessian.
Instead, we add a dampening term to derive a numerically stable scale-invariant solver which we use for SIP training.
Unlike SGD and Adam, the convergence of SIP training does not decelerate as early as the other methods, resulting in an exponentially faster convergence.
At iteration 100, the predictions are around 34\% more accurate compared to Adam, and the difference increases to 130\% after 10k iterations, making the reconstructions noticeably sharper than with traditional training methods (Fig.~\ref{fig:Poisson}c).
To test the dependence of SIP on hyperparameters like batch size or learning rate, we perform this experiment with a range of values (see appendix~B.3).
Our results indicate that SIP training and Adam are impacted the same way by non-optimal hyperparameter configurations.

\paragraph{Navier-Stokes equations} \label{sec:ns}


Fluids and turbulence are among the most challenging and least understood areas of physics due to their highly nonlinear behavior and chaotic nature~\cite{NavierStokesOpenQuestions}.
We consider a two-dimensional system governed by the incompressible Navier-Stokes equations: $\frac{\partial v}{\partial t} = \nu \nabla^2 v - v \cdot \nabla v - \nabla p$, $\nabla \cdot v = 0$, where $p$ denotes pressure and $\nu$ the viscosity.
At $t=0$, a region of the fluid is randomly marked with a massless colorant $m_0$ that passively moves with the fluid, $\frac{\partial m}{\partial t} = - v \cdot \nabla m$.
After time $t$, the marker is observed again to obtain $m_t$.
The fluid motion is initialized as a superposition of linear motion, a large vortex and small-scale perturbations.
An example observation pair $y^* = \{m_0, m_t\}$ is shown in Fig.~\ref{fig:Fluid}a.
The task is to find an initial fluid velocity $x \equiv v_0$ such that the fluid simulation $\mathcal P$ matches $m_t$ at time $t$.
Since $\mathcal P$ is deterministic, $x$ encodes the complete fluid flow from $0$ to $t$.
We define the objective in frequency space with lower frequencies being weighted more strongly.
This definition considers the match of the marker distribution on all scales, from the coarse global match to fine details, and is compatible with the definition in Eq.~\ref{eq:inverse-problem}.
We train a U-net~\cite{UNet} to solve these inverse problems; the learning curves are shown in Fig.~\ref{fig:Fluid}b.

When training with Adam, the error decreases for the first 100 iterations while the network learns to infer velocities that lead to an approximate match.
The error then proceeds to decline at a much lower rate, nearly coming to a standstill.
This is caused by an overshoot in terms of vorticity, as visible in Fig.~\ref{fig:Fluid}a right.
While the resulting dynamics can roughly approximate the shape of the observed $m_t$, they fail to match its detailed structure.
Moving from this local optimum to the global optimum is very hard for the network as the distance in $x$ space is large and the gradients become very noisy due to the highly non-linear physics.
A similar behavior can also be seen when optimizing single inverse problems with gradient descent where  it takes more than 20k iterations for GD to converge on single problems.

\begin{figure}[t!]
\centering
\includegraphics[width=8.0cm]{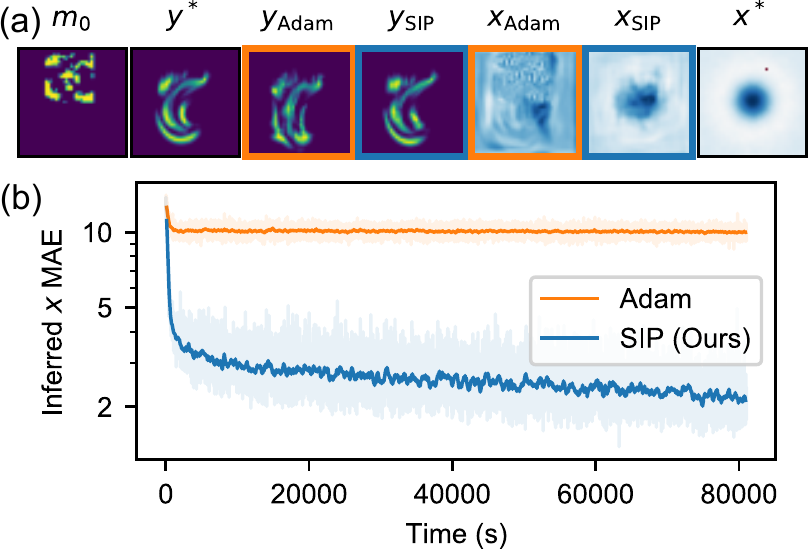}
\caption{\label{fig:Fluid} Incompressible fluid flow reconstruction. (a)~Example from the data set: initial marker distribution ($m_0$); simulated marker distribution after time $t$ using ground-truth velocity ($y^*$) and network predictions ($y_\circ$); predicted initial velocities ($x_\circ$); ground truth velocity ($x^*$).
(b)~Learning curves, running average over 64 mini-batches.}
\end{figure}

For SIP training, we run the simulation in reverse, starting with $m_t$, to estimate the translation and vortex of $x$ in a scale-invariant manner.
%
When used for network training, we observe vastly improved convergence behavior compared to first-order training.
The error rapidly decreases during the first 200 iterations, at which point the inferred solutions are more accurate than pure Adam training by a factor of 2.3.
The error then continues to improve at a slower but still exponentially faster rate than first-order training, reaching a relative accuracy advantage of 5x after 20k iterations.
To match the network trained with pure Adam for 20k iterations, the SIP variant only requires 55 iterations.
This improvement is possible because the inverse-physics solver and associated SIP updates do not suffer from the strongly-varying gradient magnitudes and directions, which drown the first-order signal in noise.
Instead, the SIP updates behave much more smoothly, both in magnitude and direction.

\begin{wraptable}{R}{7cm}
\vspace{-0.5cm}
\caption{\label{tab:fluid-speed}Time to reach equal solution quality in the fluid experiment, measured as MAE in $x$ space.
The inference time is given per example in batch mode, followed by the number of iterations in parentheses.}\vspace{0.15cm}
\begin{tabular}{ lcc } 
 \hline
 Method & Training time & Inference time \\ 
 \hline
 $\mathrm{NN}$ & 17.6 h (15.6 k) & 0.11 ms \\ 
 $\mathcal P^{-1}_\mathrm{NS}$ & n/a & 2.2 s (7) \\ 
 GD & n/a & $>$ 4h (20k) \\ 
 \hline
\end{tabular}
\vspace{-0.2cm}
\end{wraptable}


Comparing the inferred solutions from the network to an iterative approach shows a large difference in inference time (table~\ref{tab:fluid-speed}).
To reach the same solution quality as the neural network prediction, $\mathcal P^{-1}_\mathrm{NS}$ needs 7 iterations on average, which takes more than 10,000 times as long, and gradient descent (GD) does not reach the same quality even after 20k iterations.
This difference is caused by $P^{-1}_\mathrm{NS}$ having to run the full forward and backward simulation for each iteration.
This cost is also required for each training iteration of the network but once converged, its inference is extremely fast, solving around 9000 problems per second in batch mode.
For both iterative solver and network, we used a batch size of 64 and divide the total time by the batch size.


\subsection{Limitations and Discussion}
While SIP training manages to find vastly more accurate solutions for the examples above, there are some caveats to consider.
First, an approximately scale-invariant physics solver is required. While in low-dimensional $x$ spaces Newton's method is a good candidate, high-dimensional spaces require another form of inversion.
Some equations can locally be inverted analytically but for complex problems, domain-specific knowledge may be required.
However, this is a widely studied field and many specialized solvers have been developed~\cite{QMC_book}.

Second, SIP uses traditional first-order optimizers to determine $\Delta\theta$.
As discussed, these solvers behave poorly in ill-conditioned settings which can also affect SIP performance when the network outputs lie on different scales.
Some recent works address this issue and have proposed network optimization based on inversion~\cite{HessianFree, kFAC, elias2020hessian}.

Third, while SIP training generally leads to more accurate solutions, measured in $x$ space, the same is not always true for the loss $L = \sum_i L_i$.
SIP training weighs all examples equally, independent of the curvature $|\frac{\partial^2L}{\partial x^2}|$ near a chosen solution.
This can cause small errors in examples with large curvatures to dominate $L$.
In these cases, or when the accuracy in $x$ space is not important, like in some control tasks, traditional training methods may perform better than SIP training.

\section{Conclusions}
We have introduced scale-invariant physics (SIP) training, a novel neural network training scheme for learning solutions to nonlinear inverse problems.
SIP training leverages physics inversion to compute scale-invariant updates in the solution space.
It provably converges assuming enough network updates $\Delta\theta$ are performed per solver evaluation and we have shown that it converges with a single $\Delta\theta$ update for a wide range of physics experiments.
The scale-invariance allows it to find solutions exponentially faster than traditional learning methods for many physics problems while keeping the computational cost relatively low.
While this work targets physical processes, SIP training could also be applied to other coupled nonlinear optimization problems, such as differentiable rendering or training invertible neural networks.

Scale-invariant optimizers, such as Newton's method, avoid many of the problems that plague deep learning at the moment.
While their application to high-dimensional parameter spaces is currently limited, we hope that our method will help establish them as commonplace tools for training neural networks in the future.

\bibliographystyle{plain}
\bibliography{bibliography.bib}

\section*{Checklist}

\begin{enumerate}

\item For all authors...
\begin{enumerate}
  \item Do the main claims made in the abstract and introduction accurately reflect the paper's contributions and scope?
    \answerYes{}
  \item Did you describe the limitations of your work?
    \answerYes{}
  \item Did you discuss any potential negative societal impacts of your work?
    \answerNA{We do not expect any negative impacts.}
  \item Have you read the ethics review guidelines and ensured that your paper conforms to them?
    \answerYes{}
\end{enumerate}

\item If you are including theoretical results...
\begin{enumerate}
  \item Did you state the full set of assumptions of all theoretical results?
    \answerYes{We declare all assumptions the main text and provide an additional compact list in the appendix.}
        \item Did you include complete proofs of all theoretical results?
    \answerYes{}
\end{enumerate}

\item If you ran experiments...
\begin{enumerate}
\item Did you include the code, data, and instructions needed to reproduce the main experimental results (either in the supplemental material or as a URL)?
    \answerYes{We provide the full source code in the supplemental material.}
\item Did you specify all the training details (e.g., data splits, hyperparameters, how they were chosen)?
    \answerYes{We list learning rates and our training procedure in the appendix.}
\item Did you report error bars (e.g., with respect to the random seed after running experiments multiple times)?
    \answerYes{We show learning curves with different seeds in the appendix.}
\item Did you include the total amount of compute and the type of resources used (e.g., type of GPUs, internal cluster, or cloud provider)?
    \answerYes{Our hardware is listed in the appendix.}
\end{enumerate}

\item If you are using existing assets (e.g., code, data, models) or curating/releasing new assets...
\begin{enumerate}
  \item If your work uses existing assets, did you cite the creators?
    \answerYes{All used software libraries are given in the appendix.}
  \item Did you mention the license of the assets?
    \answerNA{No proprietary licenses are involved.}
  \item Did you include any new assets either in the supplemental material or as a URL?
    \answerYes{We include our source code.}
  \item Did you discuss whether and how consent was obtained from people whose data you're using/curating?
    \answerNA{}
  \item Did you discuss whether the data you are using/curating contains personally identifiable information or offensive content?
    \answerNA{}
\end{enumerate}

\item If you used crowdsourcing or conducted research with human subjects...
\begin{enumerate}
  \item Did you include the full text of instructions given to participants and screenshots, if applicable?
    \answerNA{}
  \item Did you describe any potential participant risks, with links to Institutional Review Board (IRB) approvals, if applicable?
    \answerNA{}
  \item Did you include the estimated hourly wage paid to participants and the total amount spent on participant compensation?
    \answerNA{}
\end{enumerate}

\end{enumerate}


\appendix
\newpage

\section{Convergence} \label{app:convergence-proof}
Here we give a more detailed explanation of some of the convergence arguments made in the paper.

\subsection{Assumptions}
The following is a sufficient, self-contained list of all assumptions required for the theorems in the main text.

\begin{enumerate}
    \item Let $L = \{L_i: \mathbb R^d \rightarrow \mathbb R\, |\, i = 1, ..., N\}$ be a finite set of functions and let $x_i^* \in \mathbb R^d$ be any global minima of $L_i$.
    \item Let $U = \{U_i: \mathbb R^d \rightarrow \mathbb R^d\, |\, i = 1, ..., N\}$ be a set of update functions for $L_i$ such that $\exists \tau > 0 \ : \ \forall x \in \mathbb R^d \ : \ L_i(x) - L_i(x + U_i(x)) \geq \tau \left(L(x) - L(x^*)\right)$
    and $\exists K > 0 \ : \ \forall x \in \mathbb R^d \ : \ ||U_i(x)|| \leq K (L(x) - L(x^*))$.
    \item Let $\mathrm{NN}_\theta: \mathbb R^m \rightarrow \mathbb R^d$ be differentiable w.r.t. $\theta$ with the property that $\exists \eta > 0 \, : \, \forall i \in 1, ..., N \  \forall x_i \in \mathbb R^d \ \forall \epsilon > 0 \ \exists n \in \mathbb N : || \mathrm{NN}_{\theta_n^{x_i}}(y_i) - x_i ||_2 \leq \epsilon$ where $\theta_n^x$ is the sequence of gradient descent steps with $\theta_{n+1}^x = \theta_n^x - \eta  \left(\frac{\partial \mathrm{NN}_\theta}{\partial \theta}\right)^T(\mathrm{NN}_{\theta_n} - x)$, $\eta > 0$.
\end{enumerate}

Given these assumptions, SIP training with sufficiently many network optimization steps $n$ is guaranteed to converge to an optimum point or region.
In the two special cases mentioned in the main text, it is guaranteed to converge even for $n=1$.

If we are interested in convergence to a global optimum, we may additionally require that $L$ is convex.

\subsection{Probable Loss Decrease for the Case $n = 1$}

\begin{wrapfigure}{R}{5cm}
\centering
\vspace{-5mm}
\includegraphics[width=5cm]{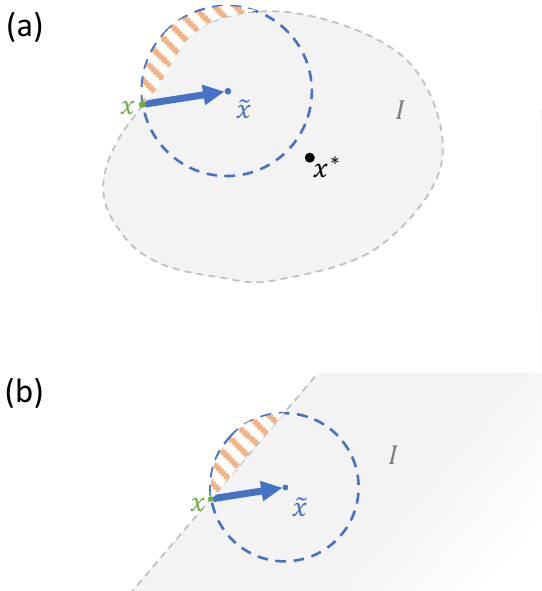}
\caption{
    (a)~Convergence visualization for the probable loss decrease in the case $n=1$.
    (b)~Zoomed view for small $\eta$.
    All shown objects are visualized in $x$ space for one example $i$. 
    }
\label{fig:convergence2}
\vspace{-.4cm}
\end{wrapfigure}

To show that the loss decreases on average, we require the the assumption that an update of the neural network weights $\theta$ does not systematically distort the update direction in $x$.
More formally, we update $\theta$ to minimize 
$$\tilde L = \frac 1 2 || \mathrm{NN}_{\theta_n^{x_i}}(y) - \tilde x ||_2^2,$$
where $\tilde x$ denotes the target in $x$ space which is computed using $U$ as defined above.
This can be done with gradient descent,
$$\Delta\theta = - \eta  \left(\frac{\partial \mathrm{NN}_\theta}{\partial \theta}\right)^T(\mathrm{NN}_\theta - \tilde x),$$
or any other method.
In fact, recent works have shown that the Jacobian $\frac{\partial \mathrm{NN}_\theta}{\partial \theta}$ can be inverted numerically to compute more precise $\theta$ updates~\cite{schnell2022half}.
The updated network will predict a new vector $\mathrm{NN}_{\theta+\Delta\theta}(y)$ and we will denote the shift in $x$ space resulting from the update $\Delta\theta$ as $\Delta\mathrm{NN} \equiv \mathrm{NN}_{\theta+\Delta\theta}(y) - \mathrm{NN}_\theta(y)$.
This shift is guaranteed to lie closer to $\tilde x$ (within the blue circle in Fig.~\ref{fig:convergence2}) as long as we choose $\eta$ appropriately.
We also know that at $\tilde x$ itself, as well as in a small region around $\tilde x$, the loss $L$ is smaller than $L(x)$.
We denote the region of decreased loss $I = \{x \in \mathbb R^d \ : \ L(x) < L(x_n)\}$.

Note that the circle radius $||\tilde x - x||_2$ scales with the step size of the higher-order optimizer since $\tilde x = x + U(x)$.
Factoring out this step size and integrating it into $\eta$ allows us to arbitrarily scale our problem in $x$ space.
In particular, when choosing $\eta$ to be small, we can linearly approximate the loss landscape (Fig.~\ref{fig:convergence2}b).
Its boundary becomes a straight line separating increased and decreased regions of the loss, and must cut the circle so that more than half of it lies within $I$.
More importantly, the expectation of $L$ integrated over the circle is smaller than $L(x)$.

Therefore, if $\Delta\mathrm{NN}$ points towards $\tilde x$ on average, i.e. $\mathbb E \left[ \Delta\mathrm{NN} \right] = \lambda (\tilde x - x)$ 
with $\lambda \in (0, 1]$, and we choose $\eta$ small enough, the loss must decrease on average.
%
While the assumption that updates of $\theta$ do not systematically distort the  direction in $x$ may not hold for all problems or network architectures, we have empirically observed it to be true over a wide range of tests.


\section{Detailed Description of the Experiments} \label{app:experiments}
Here, we give a more detailed description of our experiments including setup and analysis.
The implementation of our experiments is based on the $\Phi_\textrm{Flow}$ (PhiFlow) framework~\cite{phiflow} and uses TensorFlow~\cite{tensorflow} and PyTorch~\cite{PyTorchAutoDiff} for automatic differentiation.
All experiments were run on an NVidia GeForce RTX 2070 Super.
Our code is open source, available at \url{https://github.com/tum-pbs/SIP}.

In all of our experiments, we compare SIP training to standard neural network optimizers.
To make this comparison as fair as possible, we try to avoid other factors that might prevent convergence, such as overfitting.
Therefore, we perform our experiments with effectively infinite training sets, sampling target observations $y^*$ on-the-fly.
This setup ensures that all optimizers can, in principle, converge to zero loss.
Since a numerical simulation of the forward problem is assumed to be available, unlimited amounts of synthetic training data can always be generated, further justifying this assumption.
To check for possible biases introduced by this procedure, we also performed our experiments with finite training sets but did not observe a noticeable change in performance as long as the data set sizes were reasonable, e.g. 1000 samples for the characterization (sine) experiment.

\subsection{Characterization with two-dimensional sine function} \label{app:sin}
In this experiment, we consider the nonlinear process
$$\mathcal P(x) = \left(\frac{\sin(\hat x_1)}{\xi}, \xi \cdot \hat x_2 \right)
\quad \mathrm{with} \quad x
\hat x = \gamma \cdot R_\phi \cdot x
\quad \mathrm{and} \quad
R_\phi = 
\begin{pmatrix}
\cos(\phi) & -\sin(\phi) \\
\sin(\phi) & \cos(\phi)
\end{pmatrix}$$
where $\gamma$ denotes a global scaling factor.
We set $\gamma=10$ as this value leads to the fastest convergence when training the network with traditional optimizers like Adam.

The conditioning factor $\xi$ continuously scales any compact solution manifold, elongating it along one axis and compressing it along the other.
Therefore, the range of values required for a solution manifold scales linearly with $\xi$, in both $x_1$ and $x_2$ if $\phi \neq 0$.
To take this into account, we show the relative accuracy $\frac{\mathrm{||x-x^*||_2}}{\xi}$ in Fig.~\ref{fig:sin-main}d, as described in the main text.
To measure $||x-x^*||_2$, we find the true solution $x^*$ analytically using $\sin^{-1}$ and determining which solution is closest.

\paragraph{Saddle-free Newton}
For SIP training, we use a variant of Newton's method.
Without modification, Newton's method also approaches maxima and saddle points which exist in this experiment.
To avoid these, we project the Newton direction and gradient into the eigenspace of the Hessian, similar to~\cite{SaddleFreeNewton}.
There, we can easily flip the Newton update along eigenvectors of the Hessian to ensure that the update always points in the direction of decreasing loss.


\paragraph{Neural network training}
We sample target states $y^* = (y_1^*, y_2^*)$ uniformly in the range $[-1, 1]$ each, and feed $y^*$ to a fully-connected neural network that predicts $x = (x_1, x_2)$.
The network consists of three hidden layers with 32, 64 and 32 neurons, respectively, and uses ReLU activations.
The final layer has no activation function and outputs two values that are interpreted as $x$.
We always train the network with a batch size of 100 and chose the best learning rate for each optimizer.
We determined the following learning rates to work the best for this problem:
SIP training with Adam $\eta=10^{-3}$,
Adam $\eta=10^{-3}$,
SGD $\eta=10^{-2} / \xi^2$,
Adadelta $\eta=3 \cdot 10^{-3}$,
Adagrad $\eta=3 \cdot 10^{-3}$,
RMSprop $\eta = 3 \cdot 10^{-5}$.
For Fig.~\ref{fig:sin-main}d and e, we run each optimizer for 10 minutes for each sample point which is enough time for more than 50k iterations with all traditional optimizers and around 15k iterations with SIP training.
We then average the last 10\% of recorded distances for the displayed value.

\subsection{Poisson's equation} \label{app:poisson}
We consider Poisson's equation, $\nabla^2 y = x$ where $x$ is the initial state and $y$ is the output of the simulator.
We set up a two-dimensional simulation with 80 by 60 cubic cells.
Our simulator computes $y = \mathcal P(x) = \nabla^{-2} x$ implicitly via the conjugate gradient method.
The inverse problem consists of finding an initial value $x^*$ for a given target $y^*$ such that $\nabla^2 y^* = x^*$.
We formulate this problem as minimizing $L(x) = \frac 1 2 || \mathcal P(x) - y^* ||_2^2 = \frac 1 2 || \nabla^{-2}(x - x^*) ||_2^2$.
We now investigate the computed updates $\Delta x$ of various optimization methods for this problem.

\paragraph{Gradient descent}
Gradient descent prescribes the update $\Delta x = - \eta \cdot \left( \frac{\partial L}{\partial x} \right)^T
= - \eta \cdot \nabla^{-2} \left(y - y^* \right)$ which requires an additional implicit solve for each optimization step.
This backward solve produces much larger values than the forward solve, causing GD-based methods to diverge from oscillations unless $\eta$ is very small.
We found that GD requires $\eta \leq 2\cdot 10^{-5}$, while the momentum in Adam allows for larger $\eta$.
For both GD and Adam, the optimization converges extremely slowly, making GD-based methods unfeasible for this problem.

\paragraph{SIP Gradients via analytic inversion}
Poisson's equation can easily be inverted analytically, yielding $x = \nabla^2 y$.
Correspondingly, we formulate the update step as $\Delta x = - \eta \cdot \frac{\partial x}{\partial y} \cdot (y - y^*) = - \eta \cdot \nabla^2 \left(y - y^* \right)$ which directly points to $x^*$ for $\eta = 1$.
Here the Laplace operator appears in the computation of the optimization direction.
This is much easier to compute numerically than the Poisson operator used by gradient descent.
Consequently, no additional implicit solve is required for the optimization and the cost per iteration is less than with gradient descent.
This computational advantages also carries over to neural network training where this method can be integrated into the backpropagation pipeline as a gradient.

\paragraph{Neural network training}
\begin{figure*}
\centering
\includegraphics[width=1.0\textwidth]{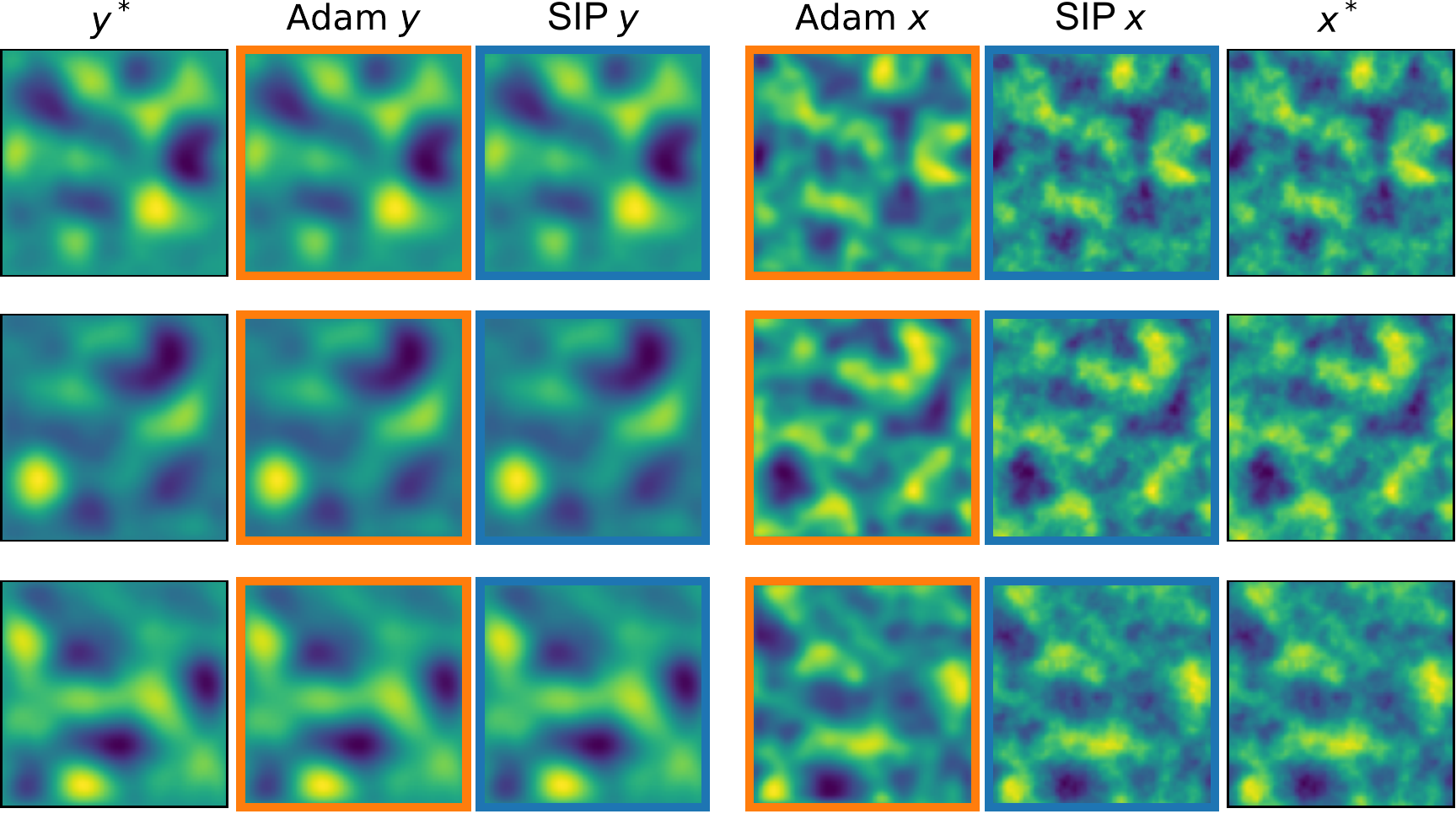}
\includegraphics[width=1.0\textwidth]{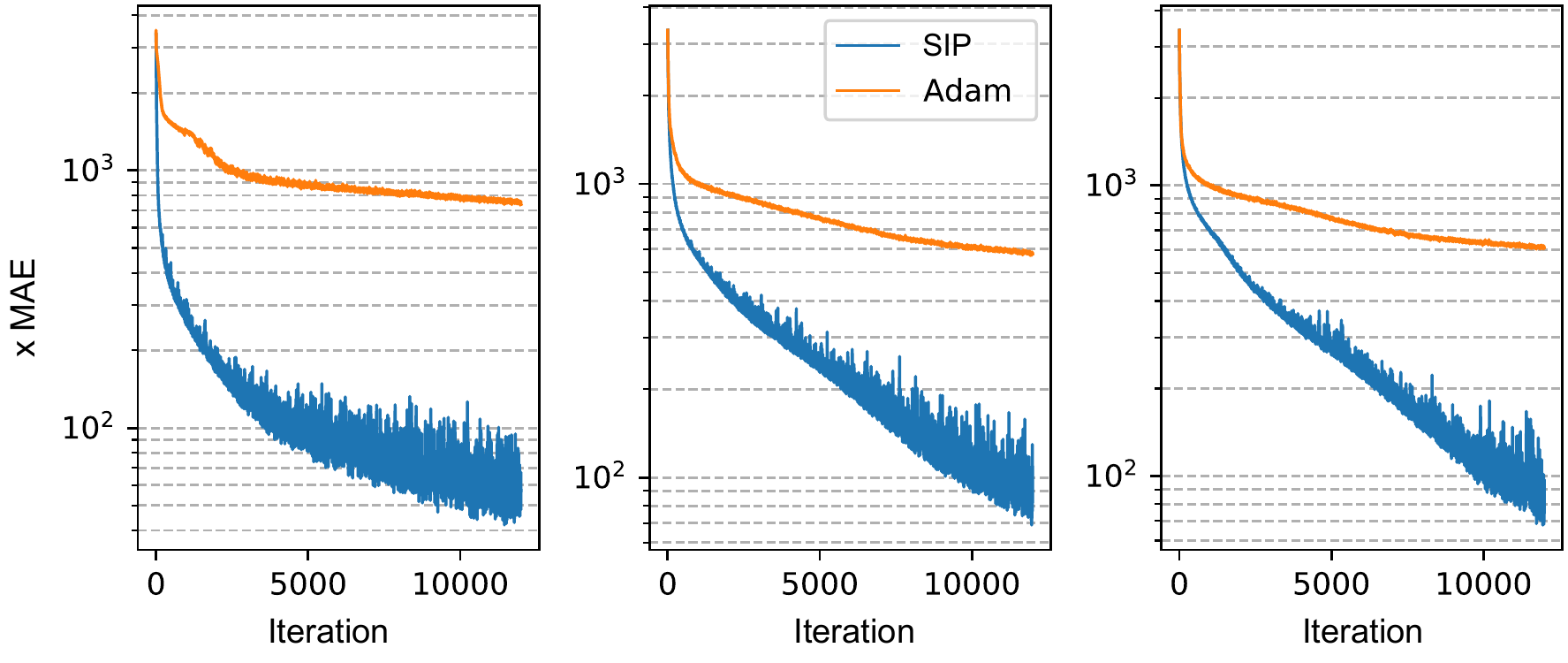}
\caption{\label{fig:poisson} Inverse problems involving Poisson's equation. \textbf{Top:}~Three examples from the data set, from left to right: observed target ($y^*$),
simulated observations resulting from network predictions (Adam $y$, SIP $y$), predicted solutions (Adam $x$, SIP $x$), ground truth solution ($x^*$). Networks were trained for 12k iterations.
\textbf{Bottom:}~Neural network learning curves for three random network initializations, measured as $||x-x^*||_1$.}
\end{figure*}

We first generate ground truth solutions $x^*$ by adding fluctuations of varying frequencies with random amplitudes.
From these $x^*$, we compute $y^* = \mathcal P(x^*)$ to form the set of target states $\mathcal Y$.
This has the advantage that learning curves are representative of both test performance as well as training performance.
The top of Fig.~\ref{fig:poisson} shows some examples generated this way.
All training methods \rebuttal{except for the Fourier neural operators (FNO)} use a a U-net~\cite{UNet} with a total of 4 resolution levels and skip connections.
The network receives the feature map $y^*$ as input.
Max pooling is used for downsampling and bilinear interpolation for upsampling.
After each downsampling or upsampling operation, two blocks consisting of 2D convolution with kernel size of 3x3, batch normalization and ReLU activation are performed.
For training with AdaHessian, we also tested $\mathrm{tanh}$ for activation but observed no difference in performance.
All of these convolutions output 16 feature maps and a final 1x1 convolution brings the output down to one feature map.
The network contains a total of 37,697 trainable parameters.
We use a mini-batch size of 128 for all methods.

For SGD and Adam training, the composite gradient of $\mathrm{NN} \circ \mathcal P$ is computed with TensorFlow or PyTorch, enabling an end-to-end optimization.
The learning rate is set to $\eta = 10^{-3}$ with Adam and $\eta = 3\cdot 10^{-12}$ for SGD.
The extremely small learning rate for SGD is required to balance out the large gradients and is consistent with the behavior of gradient-descent optimization on single examples where an $\eta = 2\cdot 10^{-5}$ was required.
We use a typical value of 0.9 for the momentum of SGD and Adam.

For training with AdaHessian~\cite{yao2020adahessian}, we use the implementation from \texttt{torch-optimizer} \cite{Novik_torchoptimizers} and found the best learning rate to be $\eta = 10^{-6}$ which, similar to SGD, is much smaller than what is used on well-conditioned problems.
However for $\eta >= 10^{-4}$ AdaHessian diverges on the Poisson equation, independent of which activation function is used.
The Hessian power is another hyperparameter of the AdaHessian optimizer and we set it to the standard value of 0.5 which converges much better than a value of 1.0.

\rebuttal{For training with the Hessian-free optimizer~\cite{HessianFree}, we use the \texttt{PyTorchHessianFree} implementation from \url{https://github.com/ltatzel/PyTorchHessianFree}.
We use the generalized Gauss-Newton matrix to approximate the local curvature because it yields more stable results than the Hessian approximation in our tests.
The optimizer uses a learning rate of $\eta = 1.0$ and adaptive damping during training.
On this problem, the Hessian-free optimizer takes between 300 and 1700 seconds to compute a single update step.
This is largely due to the CG loop, performing between 50 and 250 iterations for each update.
The solution error drops to 900 within the plotted time frame in Fig.~\ref{fig:Poisson}b, corresponding to less then 6 iterations.
Convergence speed then slows continuously, reaching a MAE of 500 after about 6 hours.}

For the training using Adam with SIP gradients, we compute $\Delta x$ as described above and keep $\eta = 10^{-3}$.
For each case, we set the learning rate to the maximum value that consistently converges.
The learning curves for three additional random network initializations are shown at the bottom of Fig.~\ref{fig:poisson}, while Fig.~\ref{fig:step-times} shows the computation time per iteration.

\rebuttal{
Training a Fourier neural operator network~\cite{kovachki2021neural} on the same task requires a different network architecture.
We use a standard 2D FNO architecture with width 32 and 12 modes along x and y.
This results in a much larger network consisting of 1,188,353 parameters.
Despite its size, its performance measured against wall clock time is superior to that of smaller versions we tested.
We train the FNO using Adam with $\eta=0.003$, which yields the best results, converging in a stable manner.}

\subsection{Heat equation} \label{app:heat}
We consider a two-dimensional system governed by the heat equation $\frac{\partial u}{\partial t} = \nu \cdot \nabla^2 u$.
Given an initial state $x = u_0$ at $t_0$, the simulator computes the state at a later time $t_*$ via $y = u_* = \mathcal P(x)$.
Exactly inverting this system is only possible for $t \cdot \nu = 0$ and becomes increasingly unstable for larger $t \cdot \nu$ because initially distinct heat levels even out over time, drowning the original information in noise.
Hence the Jacobian of the physics $\frac{\partial y}{\partial x}$ is near-singular.
In our experiment we set $t\cdot \nu = 8$ on a domain consisting of 64x64 cells of unit length.
This level of diffusion is challenging, and diffuses most details while leaving the large-scale structure intact.

We apply periodic boundary conditions and compute the result in frequency space where the physics can be computed analytically as $\hat y = \hat x \cdot e^{-k^2 (t_* - t_0)}$ where $\hat y_k \equiv \mathcal F(y)_k$ denotes the $k$-th element of the Fourier-transformed vector $y$.
Here, high frequencies are dampened exponentially.
The inverse problem can thus be written as minimizing $L(x) = \frac 1 2 || \mathcal P(x) - y^* ||_2^2 = \frac 1 2 || \mathcal F^{-1}\left( \mathcal F(x) \cdot e^{-k^2 (t_* - t_0)} \right) - y^* ||_2^2$.

\paragraph{Gradient descent}
Using the analytic formulation, we can compute the gradient descent update as
$$\Delta x = - \eta \cdot \mathcal F^{-1}\left( e^{-k^2 (t_* - t_0)} \mathcal F(y - y^*) \right).$$
GD applies the forward physics to the gradient vector itself, which results in updates that are stable but lack high frequency spatial information.
Consequently, GD-based optimization methods converge slowly on this task after fitting the coarse structure and have severe problems in recovering high-frequency details.
This is not because the information is fundamentally missing but because GD cannot adequately process high-frequency details.

\paragraph{Stable SIP gradients}
The frequency formulation of the heat equation can be inverted analytically, yielding
$\hat x_k = \hat y_k \cdot e^{k^2 (t_* - t_0)}$.
This allows us to define the update 
$$\Delta x = - \eta \cdot \mathcal F^{-1}\left( e^{k^2 (t_* - t_0)} \mathcal F(y - y^*) \right).$$
Here, high frequencies are multiplied by exponentially large factors, resulting in numerical instabilities.
When applying this formula directly to the gradients, it can lead to large oscillations in $\Delta x$.
This is the opposite behavior compared to Poisson's equation where the GD updates were unstable and the SIP stable.

The numerical instabilities here can, however, be avoided by taking a probabilistic viewpoint.
The observed values $y$ contain a certain amount of noise $n$, with the remainder constituting the signal $s = y - n$.
For the noise, we assume a normal distribution $n \sim \mathcal N(0, \epsilon \cdot y)$ with $\epsilon > 0$ and for the signal, we assume that it arises from reasonable values of $x$ so that $s \sim \mathcal N(0, \delta \cdot e^{-k^2})$ with $\delta > 0$.
Then we can estimate the probability of an observed value arising from the signal using Bayes' theorem
$p(s | y) = \frac{p(y \,|\, s) \cdot p(s)}{p(y \,|\, s) \cdot p(s) + p(y \,|\, n) \cdot p(n)}$ where we assume the priors $p(s) = p(n) = \frac 1 2$.
Based on this probability, we dampen the amplification of the inverse physics which yields a stable inverse.
Gradients computed in this way hold as much high-frequency information as can be extracted given the noise that is present.
This leads to a much faster convergence and more precise solution than any generic optimization method.

\paragraph{Neural network training}
\begin{figure*}
\centering
\includegraphics[width=1.0\textwidth]{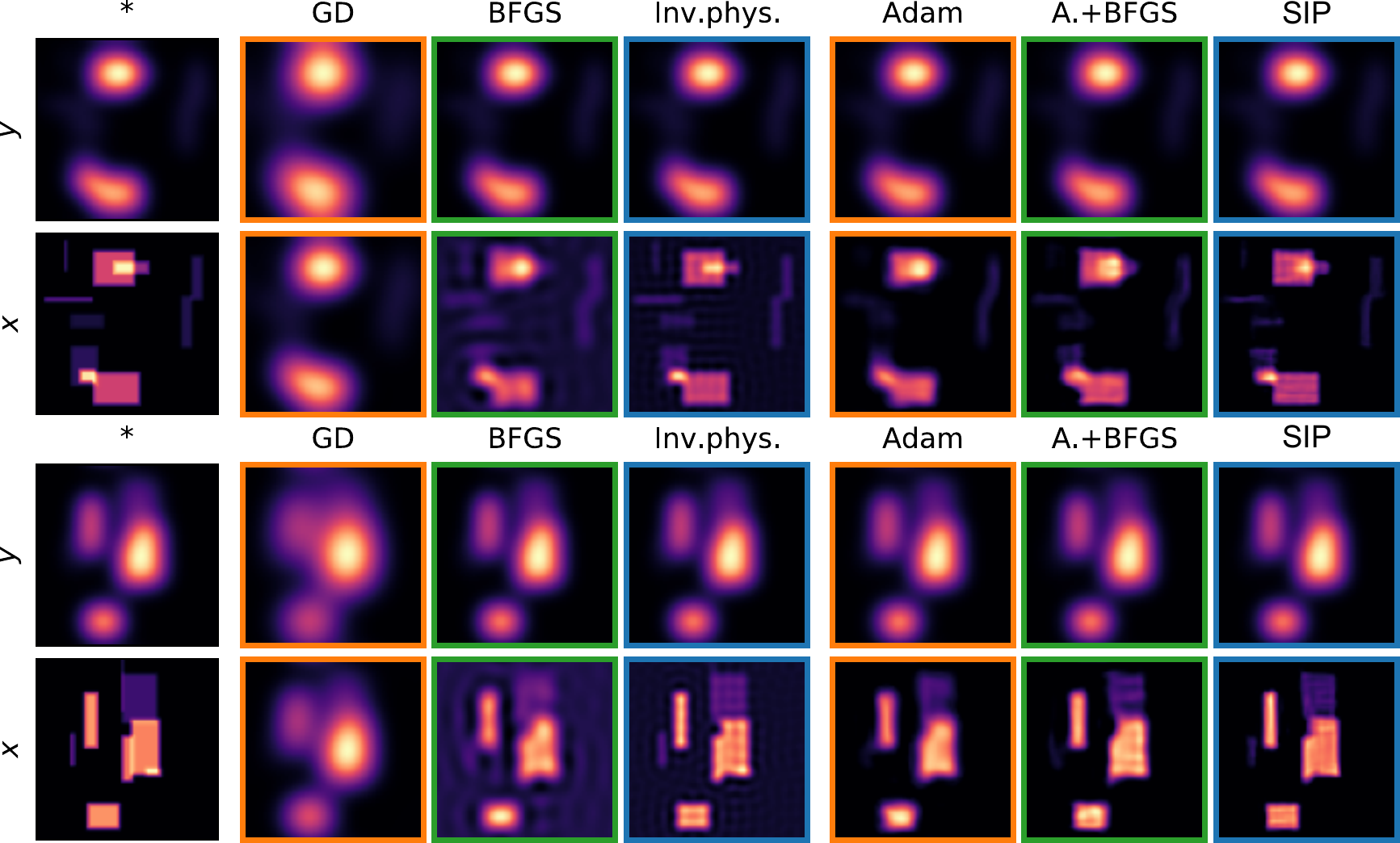}
\includegraphics[width=1.0\textwidth]{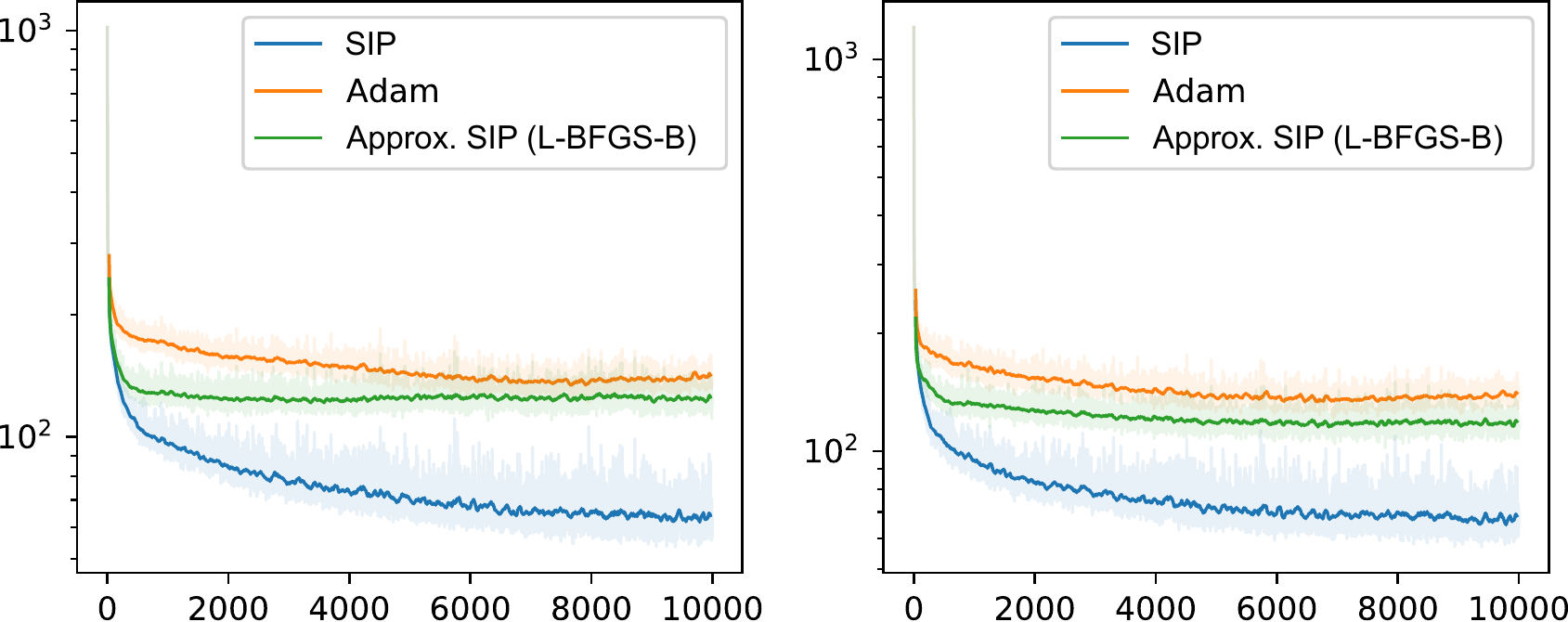}
\caption{\label{fig:heat} Inverse problems involving the heat equation. \textbf{Top:}~Two examples from the data set. The top row shows observed target ($y^*$) and simulated observations resulting from inferred solutions.
The bottom row shows the ground truth solution ($x^*$) and inferred solutions.
From left to right: ground truth; gradient descent (GD), L-BFGS-B (BFGS) and inverse physics (Inv.phys.), running for 100 iterations each, starting with $x_0=0$; Networks trained for 10k iterations.
\textbf{Bottom:}~Neural network learning curves for two random network initializations, measured in terms of $||x-x^*||_1$.}
\end{figure*}

For training, we generate $x^*$ by randomly placing between 4 and 10 hot rectangles of random size and shape in the domain and computing $y = \mathcal P(x^*)$.
\rebuttal{For the neural network, we use the same U-net and FNO architectures as in the previous experiment.
We train all methods with a batch size of 128.
A learning rate of $\eta = 10^{-3}$ yields the best results for SGD, Adam, AdaHessian, FNO and SIP training.}
Unlike the Poisson experiment, where such large learning rates lead to divergence due to the large gradient magnitudes, the heat equation produces relatively small and predictable gradients.
Larger $\eta$ can still result in divergence, especially with AdaHessian.
\rebuttal{For the Hessian-free optimizer, we use $\eta = 1.0$ with adaptive damping during training. Due to the more compute-intensive updates, we plot the running average over 8 mini-batches for Hessian-free in Fig.~\ref{fig:Poisson}, instead of the usual 64.}

Fig.~\ref{fig:heat} shows two examples from the data set, along with the corresponding inferred solutions, as well as the network learning curves for two network initializations.
The measured computation time per iteration is shown in Fig.~\ref{fig:step-times}.

\rebuttal{We perform additional hyperparameter studies on this experiment to better gauge how SIP training compares to traditional network optimization in a variety of settings.
Fig.~\ref{fig:heat-hyperparameters} shows the learning curves for a varying learning rate $\eta$ and batch size $b$.
The best performance of both methods is achieved at $\eta = 10^{-3}$ and $b=128$.

Additionally, we evaluate the methods on finite data sets with sizes between 32 and 2048 examples while also varying the batch size.
Fig.~\ref{fig:heat-finite-data-sets} shows the performance on both the training and the test set.
Both SIP training and Adam exhibit overfitting under the same conditions.
For data set sizes of 128 and less, both methods start to overfit early on.
This is especially pronounced for large batch sizes where less randomness is involved.
In the cases where the batch size is equal to or larger than the data set size, no mini-batching is used. 
We tile the data set where needed.
For very small data sets, SIP training seems to exhibit stronger overfitting but this can be explained by its superior performance.
The test performance of SIP training always surpasses the Adam variant, even on small data sets.
}

\begin{figure*}
\centering
\includegraphics[width=1.0\textwidth]{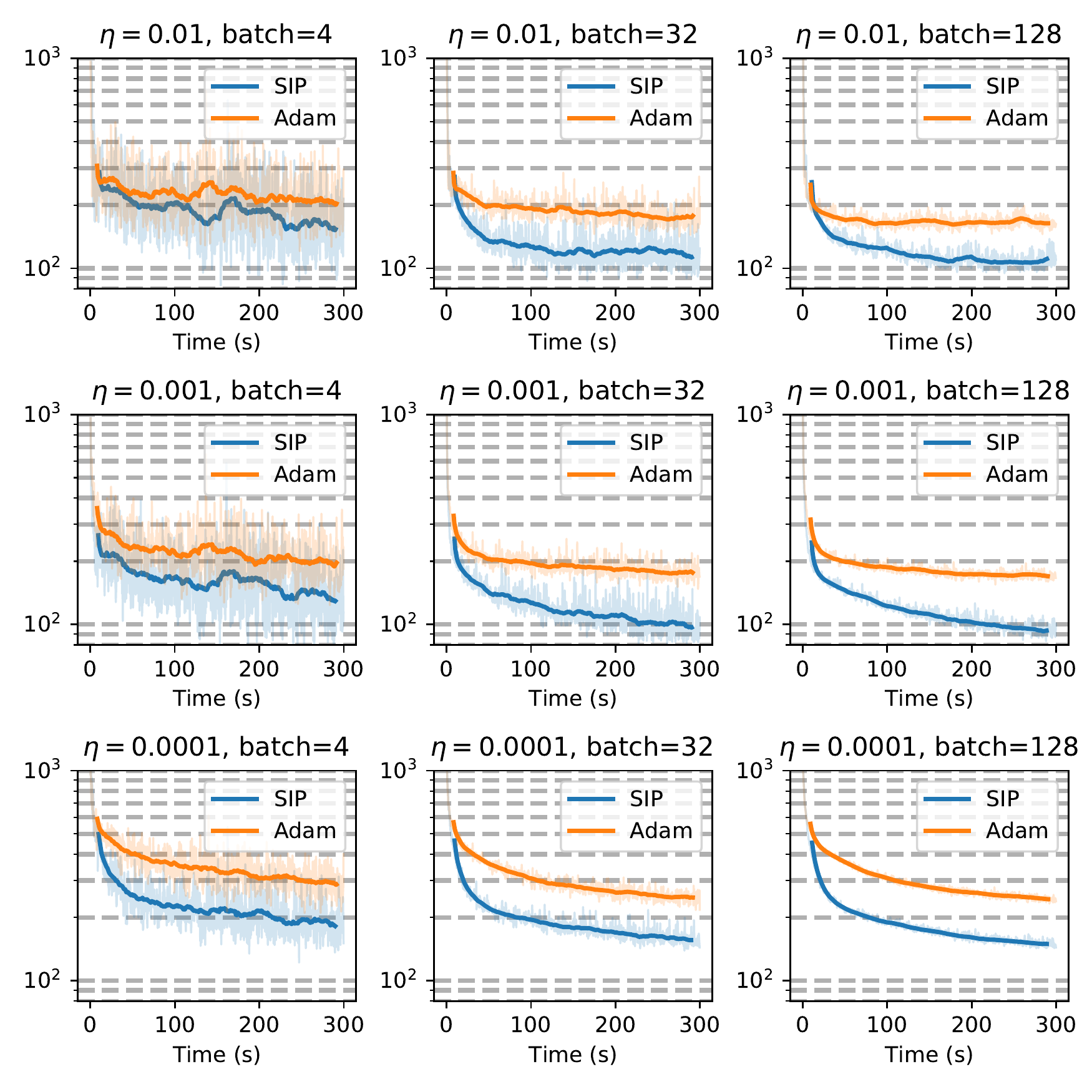}
\caption{\label{fig:heat-hyperparameters} Hyperparameter study for the heat equation experiment. The learning rate $\eta$ varies vertically from $10^{-4}$ to $10^{-2}$ and the batch size varies horizontally from 4 to 128 examples per mini-batch. The solid curves are averaged over 64 mini-batches. A learning rate of $10^{-3}$ with large batch sizes yields best performance.}
\end{figure*}

\begin{figure*}
\centering
\includegraphics[width=1.0\textwidth]{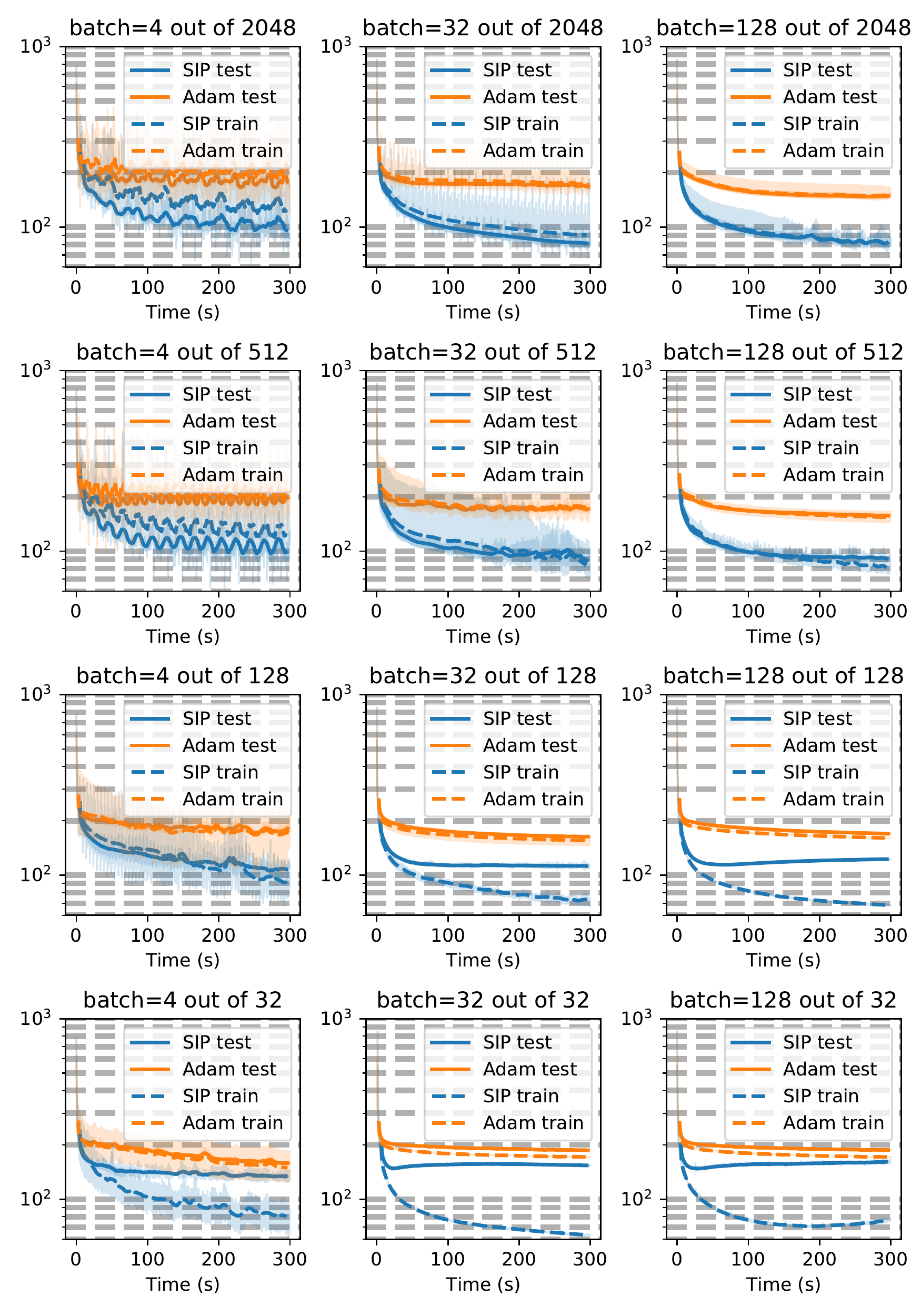}
\caption{\label{fig:heat-finite-data-sets} \rebuttal{Heat equation experiment with different training set sizes. The training set size $|\mathcal T|$ varies vertically from 32 to 2048 and the batch size $b$ varies horizontally from 4 to 128 examples per mini-batch. The test set is of fixed size 128. The solid curves are averaged over 64 mini-batches. Overfitting occurs in both SIP training and Adam when no mini-batches are used, $|\mathcal T| \leq b$, and for small $b$ where the convergence is unstable.}}
\end{figure*}

\rebuttal{We also run iterative optimizers on individual examples of the data set.
Fig.~\ref{fig:heat-iterative-opt} shows the optimization curves of gradient descent and \mbox{L-BFGS-B} and compares them to the network predictions.
\mbox{L-BFGS-B} converges faster than gradient descent but both iterative optimizers progress slowly on this inverse problem due to its ill-conditioned nature.
After 500 iterations, \mbox{L-BFGS-B} matches the solution accuracy of the neural network trained with Adam.
We have run both traditional optimizers for 1000 iterations, representing 102 seconds for BFGS and 36 seconds for gradient descent.
This is about 1000 times longer than the network predictions, which finish within 64 ms.
}

\begin{figure*}
\centering
\includegraphics{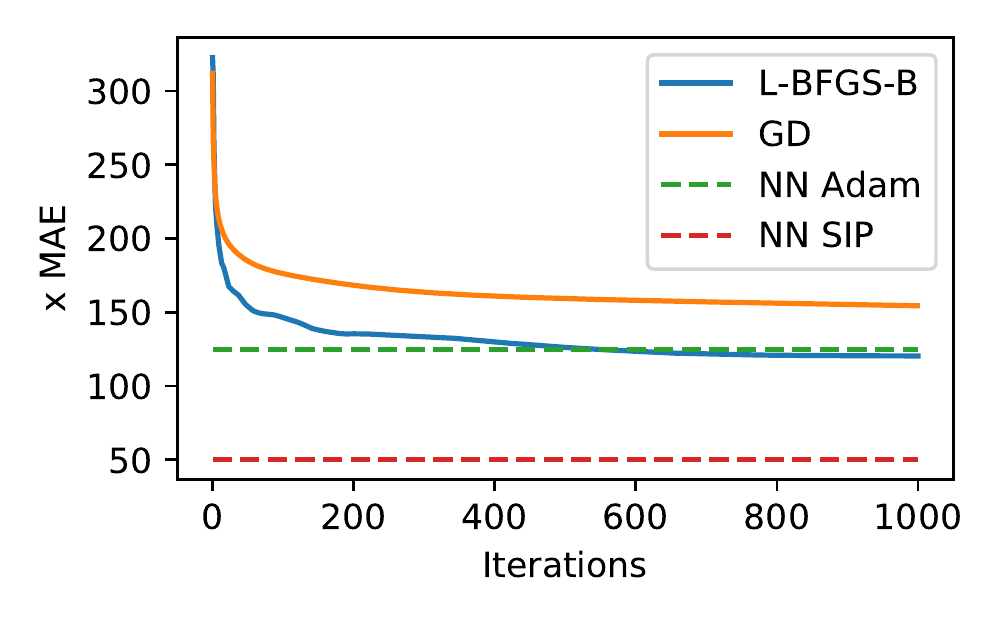}
\caption{\label{fig:heat-iterative-opt} \rebuttal{Iterative optimization of individual examples from a test set of 128 heat examples. Gradient descent (GD) and L-BFGS-B directly optimize the initial state $x_0$ (64x64 grid), independently for each example. The predictions of the trained and frozen neural networks (NN Adam, NN SIP) are evaluated on the same data set for reference.}}
\end{figure*}

\subsection{Navier-Stokes equations} \label{app:fluid}
Here, we give additional details on the simulation, data generation, SIP gradients and network training procedure for the fluid experiment.

\paragraph{Simulation details}
We simulate the fluid dynamics using a direct numerical solver.
We adopt the marker-in-cell (MAC) method~\cite{Mac1965, MAC} which guarantees stable simulations even for large velocities or time increments.
The velocity vectors are sampled in staggered form at the face centers of grid cells while the marker density is sampled at the cell centers.
The initial velocity $v_0$ is specified at cell centers and resampled to a staggered grid for the simulation.
Our simulation employs a second-order advection scheme~\cite{MacCormackStable} to transport both the marker and the velocity vectors.
This step introduces significant amount of numerical diffusion which can clearly be seen in the final marker distributions.
Hence, we do not numerically solve for adding additional viscosity.
Incompressibility is achieved via Helmholz decomposition of the velocity field using a conjugate gradient solve.

Neither pressure projection nor advection are energy-conserving operations.
While specialized energy-conserving simulation schemes for fluids exist~\cite{EnergyConservingFluid1999, EnergyConservingFluidAnim}, 
we instead enforce energy conservation by normalizing the velocity field at each time step to the total energy of the previous time step.
Here, the energy is computed as $E = \int_{\mathbb R^2} dx \, v(x)^2$ since we assume constant fluid density.

\paragraph{Data generation}
The data set consists of marker pairs $\{m_0, m_t\}$ which are randomly generated on-the-fly.
For each example, a center position for $m_0$ is chosen on a grid of 64x64 cells.
$m_0$ is then generated from discretized noise fluctuations to fill half the domain size in each dimension.
The number of marked cells is random.

Next, a ground truth initial velocity $v_0$ is generated from three components.
First, a uniform velocity field moves the marker towards the center of the domain to avoid boundary collisions. Second, a large vortex with random strength and direction is added. The velocity magnitude of the vortex falls off with a Gaussian function depending on the distance from the vortex center.
Third, smaller-scale vortices of random strengths and sizes are added additionally perturb the flow fields.
These are generated by assigning a random amplitude and phase to each frequency making up the velocity field.
The range from which the amplitudes are sampled depends on the magnitude frequency.

Given $m_0$ and $v_0$, a ground truth simulation is run for $t=2$ with $\Delta t = 0.25$.
The resulting marker density is then used as the target for the optimization.
This ensures that there exists a solution for each example.

\paragraph{Computation of SIP gradients}
To compute the SIP gradients for this example, we construct an explicit formulation $\hat v_0 = \mathcal P^{-1}(m_0, m_t \,|\,x_0)$ that produces an estimate for $v_0$ given an initial guess $x_0$ by locally inverting the physics.
From this information, it fits the coarse velocity, i.e. the uniform velocity and the vortex present in the data.
This use of domain knowledge, i.e., enforcing the translation and rotation components of the velocity field as a prior, is what allows it to produce a much better estimate of $v_0$ than the regular gradient.
More formally, it assumes that the solution lies on a manifold that is much more low-dimensional than $v_0$.
On the other hand, this estimator ignores the small-scale velocity fluctuations which limits the accuracy it can achieve.
However, the difficulty of fitting the full velocity field without any assumptions outweighs this limitation.
Nevertheless, GD could eventually lead to better results if trained for an extremely long time.

To estimate the vortex strength, the estimator runs a reverse Navier-Stokes simulation.
The reverse simulation is initialized with the marker $m_t^\textrm{rev} = m_t$ and velocity $v_t^\textrm{rev} = v^t$ from the forward simulation.
The reverse simulation then computes $m^\textrm{rev}$ and $v^\textrm{rev}$ for all time steps by performing simulation steps with $\Delta t = -0.25$.
Then, the update to the vortex strength is computed from the differences $m^\textrm{rev} - m$ at each time step and an estimate of the vortex location at these time steps.


\paragraph{Neural network training}
\begin{figure*}
\centering
\includegraphics[width=0.8\textwidth]{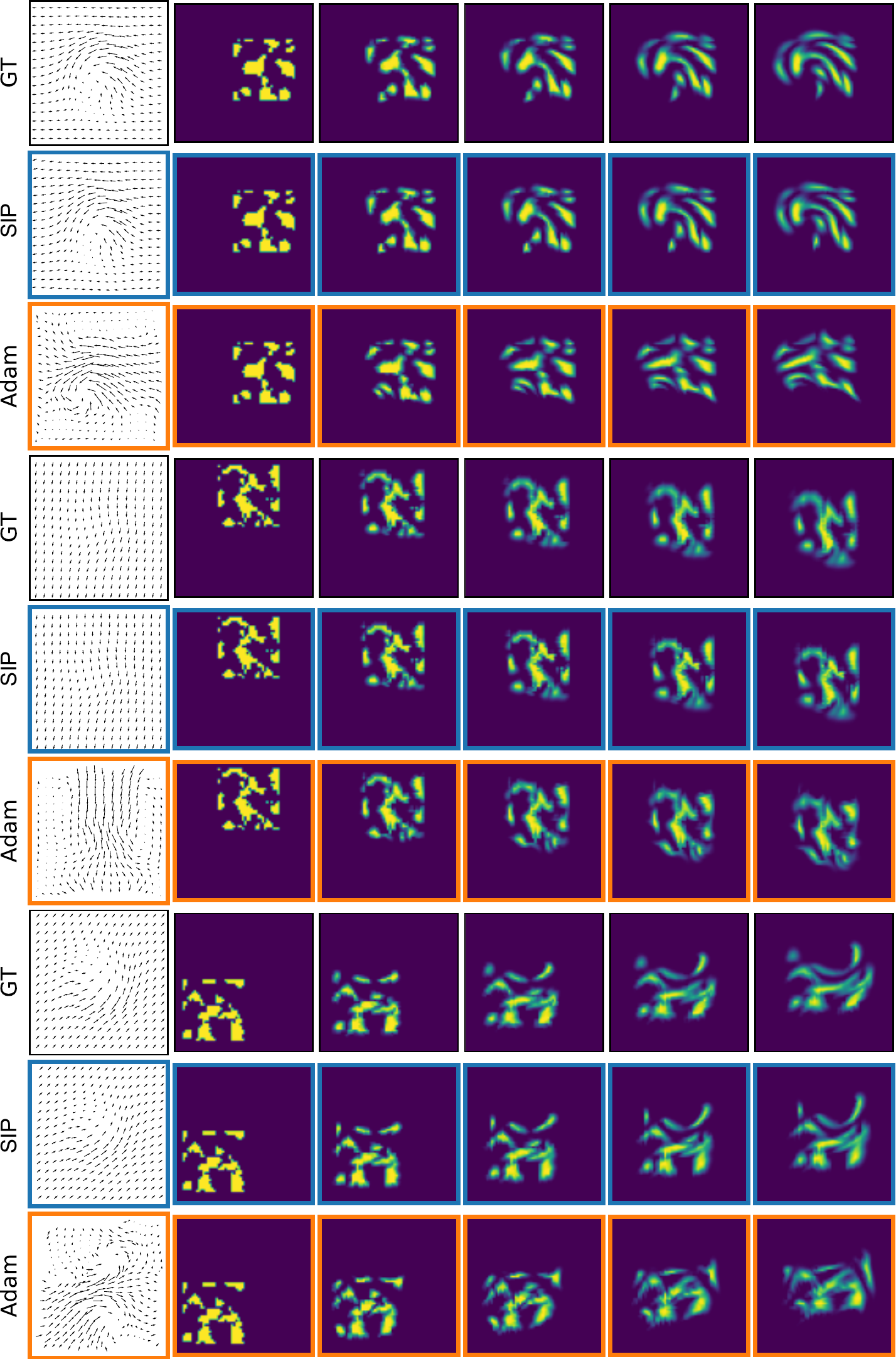}
\caption{\label{fig:fluid} Three example inverse problems involving the Navier-Stokes equations.
For each example, the ground truth (GT) and neural network reconstructions using Adam with SIP gradient (SIP) and pure Adam training (Adam) are displayed as rows.
Each row shows the initial velocity $v_0 \equiv x$ as well as five frames from the resulting marker density sequence $m(t)$, at time steps $t \in \{0, 0.5, 1, 1.5, 2 \}$.
The differences of the Adam version are especially clear in terms of $v_0$.
}
\end{figure*}

We train a U-net~\cite{UNet} similar to the previous experiments but with 5 resolution levels.
The network contains a total of 49,570 trainable parameters.
The network is given the observed markers $m_0$ and $m_t$, resulting in an input consisting of two feature maps.
It outputs two feature maps which are interpreted as a velocity field sampled at cell centers.

The objective function is defined as $|\mathcal F(\mathcal P(x) - y^*)| \cdot w$ where $\mathcal F$ denotes the two-dimensional Fourier transform and $w$ is a weighting vector that factors high frequencies exponentially less than low frequencies.

We train the network using Adam with a learning rate of 0.005 and mini-batches containing 64 examples each, using PyTorch's automatic differentiation to compute the weight updates.
We found that second-order optimizers like \mbox{L-BFGS-B} yield no significant advantage over gradient descent, and typically overshoot in terms of high-frequency motions. 
Example trajectories and reconstructions are shown in Fig.~\ref{fig:fluid} and performance measurements are shown in Fig.~\ref{fig:step-times}.

\begin{figure*}
\centering
\includegraphics[width=1.0\textwidth]{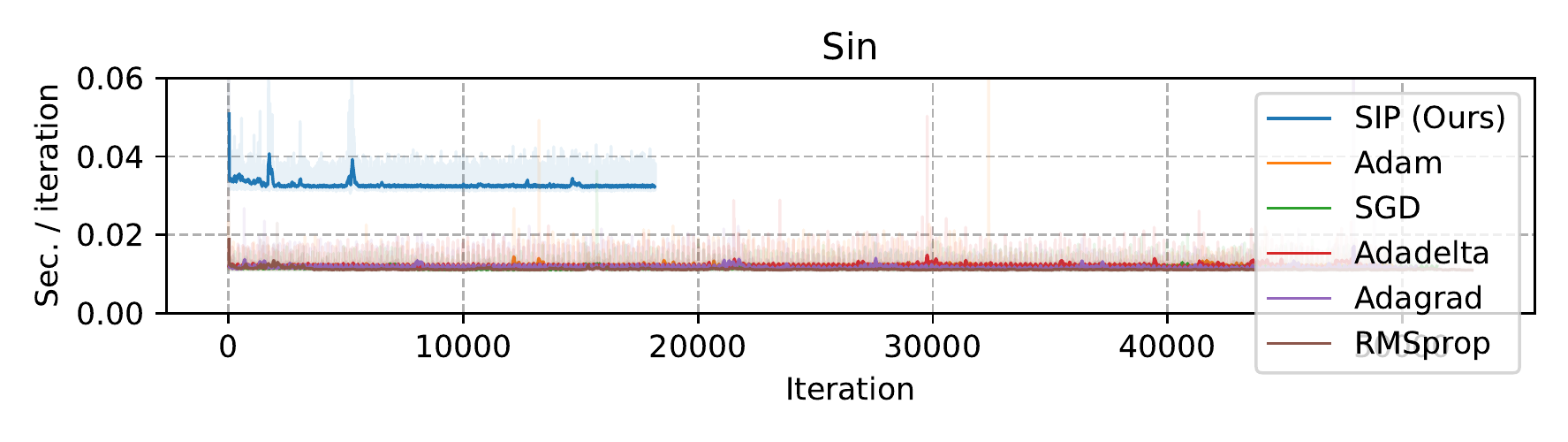}
\includegraphics[width=1.0\textwidth]{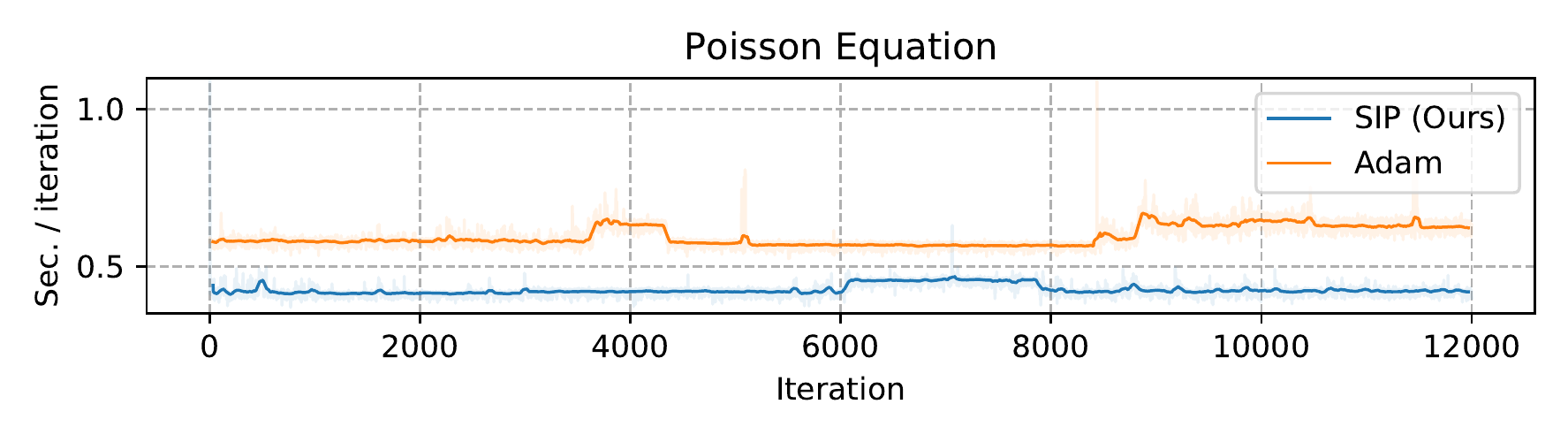}
\includegraphics[width=1.0\textwidth]{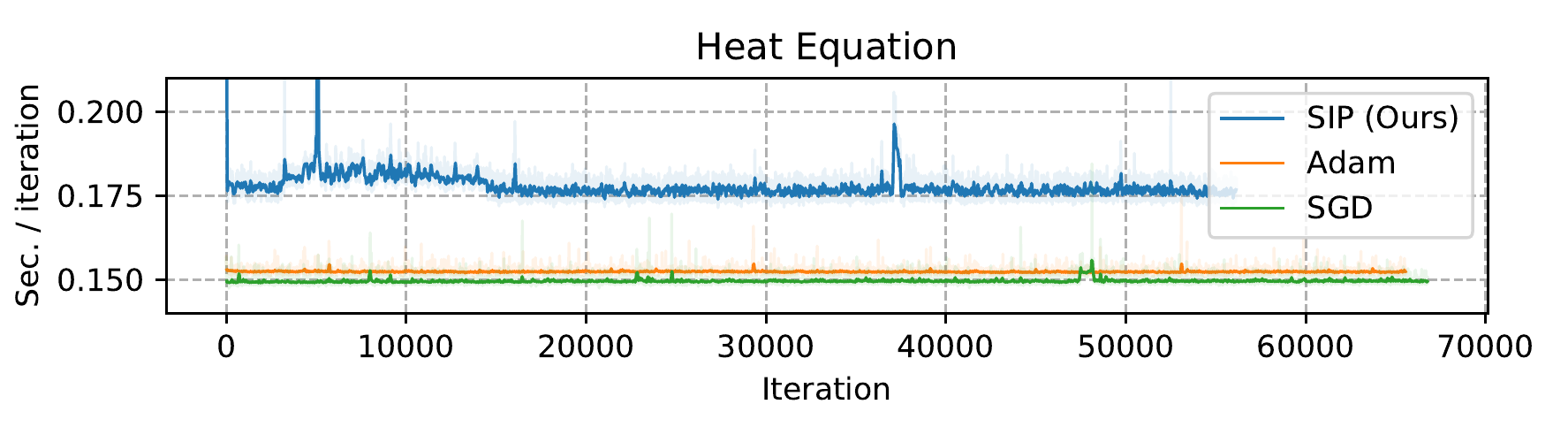}
\includegraphics[width=1.0\textwidth]{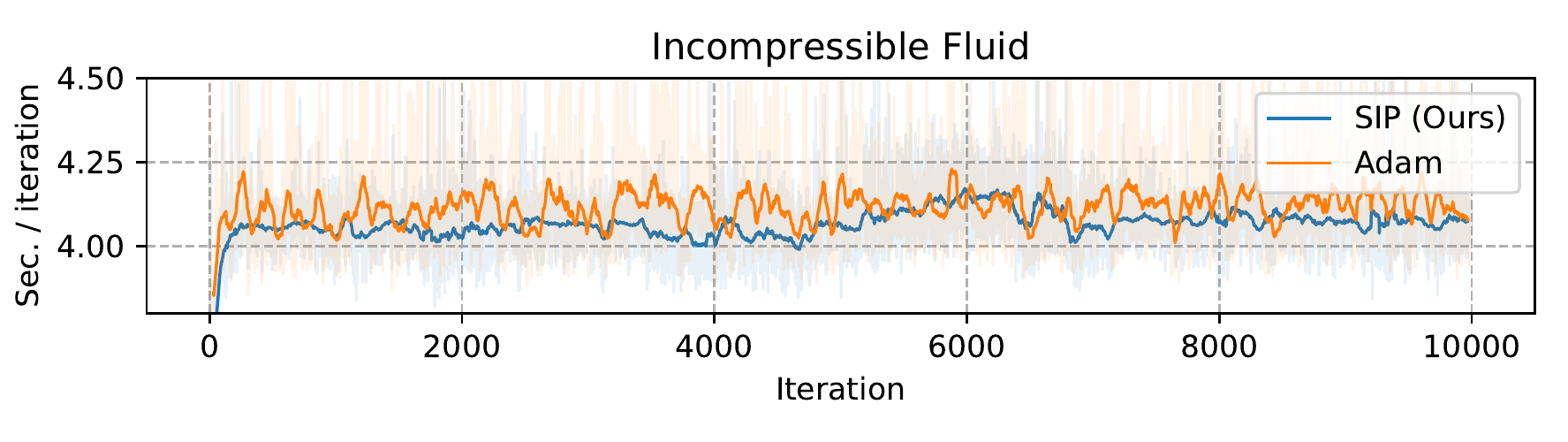}
\caption{\label{fig:step-times} Measured time per neural network training iteration for all experiments, averaged over 64 mini-batches. Step times were measured using Python's \texttt{perf\_counter()} function and include data generation, gradient evaluation and network update. In all experiments, the computational cost difference between the various gradients is marginal, affecting the overall training time by less than 10\%.}
\end{figure*}

\newpage
\section{Additional Experiments}

Here, we describe the additional experiments that were referenced in sections 1 and 2.

\subsection{Wave packet localization} \label{app:wavepacket}
As we state in the introduction, one advantage of solution inference using neural networks is that no initial guess needs to be provided for each problem.
Of course the network starts off with some initialization but we observe that the network can explore a much larger area of the solution space than any iterative solver.
To our knowledge this claim has not been verified as of yet.
However, as it is not directly relevant to our method, we do not discuss it in detail in the main text.
Instead, we provide a simple example here.

The wave packet localization experiment is an instance of a generic curve fitting problem.
The task is to find an offset parameter $t_0$ that results in least mean squared error between a noisy recorded curve and the model.

\paragraph{Data generation.}
We simulate an observed time series $y^*$ from a random ground truth position $x^* = t_0$.
Each time series contains 256 entries and consists of the wave packet and superimposed noise.
For the wave packet, we sample $t_0 \in [25.6, 128)$ from a uniform distribution.
The wave packet has the functional form
$$y(t) = A \cdot \sin(f \cdot (t-t_0)) \cdot \exp \left(- \frac 1 2 \frac{(t - t_0)^2}{\sigma^2} \right)$$
where we set $A = 1$, $f = 0.7$ and $\sigma = 20$ constant for all data.
For the noise, we superimpose random values sampled from the normal distribution $\mathcal N(0, 0.1)$ at each point.

\paragraph{Network architecture.}
We construct the neural network from convolutional blocks, followed by fully-connected layers, all using the ReLU activation function.
The input is first processed by five blocks, each containing a max pooling operation and two 1D convolutions with kernel size 3.
Each convolution outputs 16 feature maps.
The downsampled result is then passed to two fully connected layers with 64 and 32 and 2 neurons, respectively, before a third fully-connected layer produces the predicted $t_0$ which is passed through a Sigmoid activation function and normalized to the range of possible values.

\begin{figure*}
\centering
\includegraphics[width=1.0\textwidth]{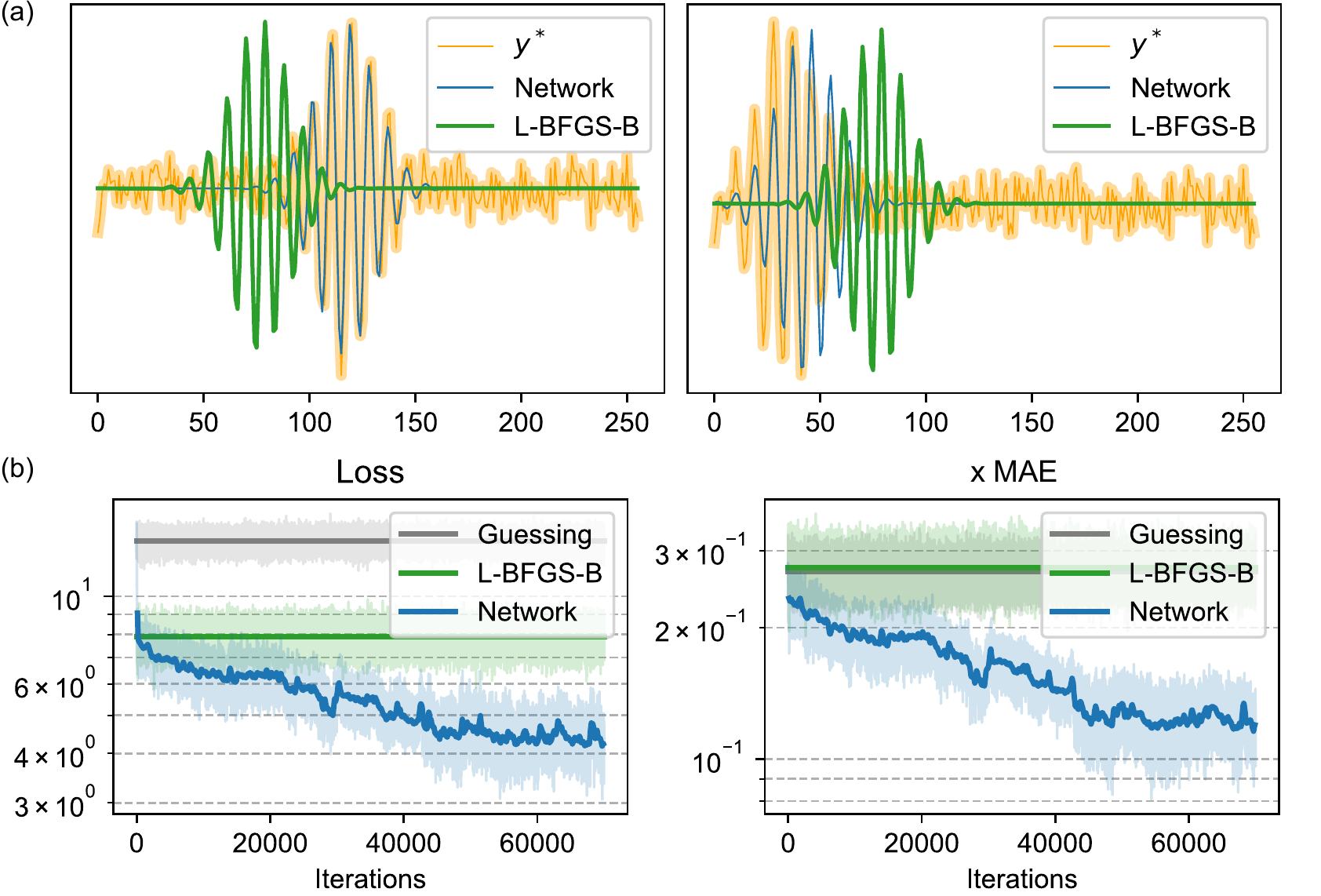}
\caption{
(a)~Two examples from the wave packet data set, each showing the simulated data (orange), neural network prediction and \mbox{L-BFGS-B} fit.
(b)~Learning curves of the network trained to localize wave packets. The performance of \mbox{L-BFGS-B} and random guessing are evaluated on the same data for reference. The left graph shows the objective $||y - y^*||_2^2$ and the right graph shows the $x$-space ($t_0$) deviation from the true solution.
}
\label{fig:wavepacket}
\end{figure*}

\paragraph{Training and fitting.}
We fit the data using \mbox{L-BFGS-B} with a centered initial guess ($t_0 = 76.8$) and the network output is offset by the same amount.
Both network and \mbox{L-BFGS-B} minimize the squared loss $||y(t_0) - y^*||_2^2$ and the resulting performance curves along with example fits are shown in Fig.~\ref{fig:wavepacket}.
We observe that \mbox{L-BFGS-B} manages to fit the wave packet perfectly when it is located very close to the center where the initial guess predicts it.
When the wave packet is located slightly to either side, \mbox{L-BFGS-B} gets stuck in a local optimum that corresponds to an integer phase shift.
When the wave packet is located further away from the initial guess, \mbox{L-BFGS-B} does not find it and instead fits the noise near the center.

The neural network is trained using Adam with learning rate $\eta = 10^{-3}$ and a batch size of 100.
Despite the simpler first-order updates, the network learns to localize most wave packets correctly, outperforming \mbox{L-BFGS-B} after 30 to 40 training iterations.
This improvement is possible because of the network's reparameterization of the problem, allowing for joint parameter optimization using all data.
When the prediction for one example is close to a local optimum, updates from different examples can prevent it from converging to that sub-optimal solution.

\paragraph{Conclusion}
This example shows that neural networks have the capability to explore the solution space much better than iterative solvers, at least in some cases.
Employing neural networks should therefore be considered, even when problems can be solved iteratively.

\newpage
\subsection{Gradient normalization for the exponential function} \label{app:exp}
The task in this simple experiment is to learn to invert the exponential function $\mathcal P(x) = e^x$.
As described in the text, both SGD and Adam converge very slowly on this task due to the gradients scaling linearly with $e^x$.

\paragraph{Gradient normalization}
We introduce a gradient normalization which first computes the gradient $\frac{\partial L}{\partial x}$.
It then normalizes this adjoint vector for each example in the batch to unit length, $\Delta x = \mathrm{sign}\left( \frac{\partial L}{\partial x} \right)$.
$\Delta x$ is then passed on to the network optimizer, replacing the standard adjoint vector for $x$.
Like with SIP training, we implement this using an $L_2$ loss for the effective network objective $\tilde L = \frac 1 2 || \mathrm{NN}(y^*) - (\mathrm{NN}(\circ) + \Delta x) ||_2^2$.

\paragraph{Neural network training}
For training data, we sample $x^*$ uniformly in the range $[-12, 0]$ and compute $y^* = e^{x^*}$.
We train a fully-connected neural network with three hidden layers, each using the Sigmoid activation function and consisting of 16, 64 and 16 neurons, respectively.
The network has a single input and output neuron.
We train the network for 10k iterations with each method, using the learning rates $\eta=10^{-3}$ for Adam, $\eta=10^{-2}$ for SGD and $\eta=10^{-3}$ for Adam with $x$ normalization.
Each mini-batch consists of 100 randomly sampled values.


\end{document}